\numberwithin{equation}{section}
\newtheorem{theorem}{\hspace*{2.0em}Theorem}[section]
\newtheorem{definition}{\hspace*{2.0em}Definition}[section]
\newtheorem{lemma}{\hspace*{2.0em}Lemma}[section]
\newtheorem{corollary}{\hspace*{2.0em}Corollary}[section]
\newtheorem{remark}{\hspace*{2.0em}Remark}[section]
\begin{document}
\begin{frontmatter}
\title{\Large {Feature Qualification by Deep Nets: A Constructive Approach}}

\author[1,2]{Feilong Cao}
 \ead{caofeilong88@zjnu.edu.cn}
 \author[3]{Shao-Bo Lin } \cortext[*]{Corresponding author: S. B. Lin}
  \ead{sblin1983@gmail.com}

\address[1]{School of Mathematics, Zhejiang Normal University, Jinhua 321014, China}

\address[2]             
    {Institute of Mathematics and Cross-disciplinary Science, Zhejiang Normal University,  Hangzhou 310012, China}


     \address[3]{Center for Intelligent Decision-Making and Machine Learning, School of Management, Xi'an Jiaotong University, Xi'an 710049, China}


\begin{abstract}
The great success of deep learning has stimulated avid research activities in verifying  the power of depth in theory, a common consensus of which is that deep net are  versatile in approximating and learning numerous functions. Such a versatility certainly enhances the understanding of the power of depth, but makes it difficult to judge which data features are crucial in a specific learning task. This paper proposes a constructive approach to equip deep nets for the feature qualification purpose. Using the product-gate nature and localized approximation property of deep nets  with sigmoid activation (deep sigmoid nets), we succeed in constructing a linear deep net operator that  possesses optimal approximation performance in approximating smooth and radial functions. Furthermore,   we   provide theoretical evidences that the constructed deep net operator is capable of qualifying multiple features such as the smoothness and radialness of the target functions.

\end{abstract}
\begin{keyword}
deep neural networks; approximation error; feature qualification; constructive neural networks
\end{keyword}
\end{frontmatter}

\section{Introduction}

The great success of deep learning in practice \cite{goodfellow2016deep} significantly  stimulates the theoretical understanding of deep neural networks (deep nets), including to pursue the  excellent  performances in approximating numerous functions \cite{elbrachter2021deep}, to derive the capacity of deep nets under different metrics \cite{guo2019realizing}, to show their tractability in learning high-dimensional data \cite{schmidt2020nonparametric},  to analyze the global landscapes for some convex losses concerning deep nets  \cite{sun2020global}, and to settle certain convergence issues for gradient-based algorithms \cite{allen2019convergence}. These encouraging developments not only build a meaningful springboard to discover the running mechanism of deep learning, but also provide promising guidance in designing new deep learning schemes. However, several issues such as the inconsistency between generalization and optimization on the size of networks \cite{lin2021generalization} and the structure selection for given  learning tasks \cite{han2020depth} still remain open. More importantly,  there is a crucial dilemma between the versatility and feature qualification for deep nets in the sense that   the versatility  prohibits deep learning to figure out which data features are extracted in the approximation and learning processes.

The aim of this paper is to settle the dilemma between the versatility and feature qualification by   constructing  deep net operators. Our study is motivated by three interesting observations. At first,   inverse theorems that qualifies the smoothness via the approximation rates have been successfully derived for polynomials \cite{devore1993constructive}, radial basis functions \cite{hangelbroek2018inverse}, and shallow nets \cite{qian2022neural}. It is highly desired to establish similar inverse theorems for deep nets. Then, the versatility of  whole deep net sets makes it impossible to qualify a single data feature by approximation rates. For example, if the target functions are smooth and spatially sparse \cite{lin2018generalization,chui2020realization,Liuapproximating2023}, then the approximation rates are essentially products of  two quantities measuring the sparseness and smoothness. Without additional a-priori information, it is  extremely difficult to factorize these products to reflect the detailed smoothness and sparsity. A preferable way to conquer this obstacle is to narrow the range of   deep nets to compromise  their versatility. Finally, constructing rather than training deep nets succeeds in avoiding the inconsistency between optimization and generalization in learning
smooth  functions \cite{wang2024component}, spatially sparse functions \cite{liu2022construction} and piecewise smooth functions \cite{Liuapproximating2023}. The construction approach presented in \cite{qian2022neural,qian2022rates02} is also capable of yielding an inverse theorem for shallow nets in terms that the smoothness of the target functions can be qualified by the approximation rates. Based on these
observations, we devote to constructing deep nets that are capable of breaking through  bottlenecks of shallow nets, and qualifying the data features   by approximation rates, simultaneously.

As it has been verified in \cite{konovalov2008approximation,konovalov2009approximation,chui2019deep} that both   classical algebraic polynomials and shallow nets are incapable of extracting the radial property of the target functions, we aim to construct deep nets to approximate smooth and radial functions to embody the power of depth. We adopt the sigmoid function  to be the nonlinear activation since its differentiability   permits to establish the so-called Bernstein-type inequality  that is crucial   in feature qualification \cite{devore1993constructive,hangelbroek2018inverse,qian2022neural}. Utilizing  the square-gate \cite{chui2019deep} and localized approximation properties of deep sigmoid nets  \cite{chui1994neural}, we succeed in constructing a linear deep-neural-net-operator (DNO) for the approximation and feature qualification purposes. From the approximation perspective, we derive almost optimal  rates for the constructed DNO in approximating smooth and radial functions  which is beyond the capability of shallow nets with arbitrary activation functions, demonstrating the power of depth in our construction. Furthermore, the derived approximation rates are essentially the same as the those for training-based deep sigmoid nets \cite{chui2019deep}, showing that the construction approach does not degenerate the approximation performance of deep nets. From the feature qualification perspective, we  prove that the construction DNO is an optimal conditional qualification operator  with respect to the smoothness and radialness. In particular, we verify that if  the smoothness of target the target functions is    known, then 
the radialness information can be sufficiently reflected by the approximation rates of DNO, and vice-verse.

The rest of the paper is organized as follows. In the next section, we introduce some basic definitions concerning deep sigmoid nets. In Section \ref{sec3}, we construct DNO and study its approximation performance. In Section \ref{sec4}, the feature qualification property of the constructed DNO is analyzed.  In Section \ref{sec5}, we present the stepping stone of our theoretical analysis and the proofs of main results are given in the last section.

\section{Deep Nets: Approximation and Feature Qualification}\label{sec2}
A deep    net  with depth $L\in\mathbb N$ and width vector $(d_1,\dots,d_L)^T$ for $d_\ell\in\mathbb N$ is mathematically  defined  by
\begin{equation}\label{deep-net}
     \mathcal N_{d_1,\dots,d_L}(x)
     = a^L\cdot  \sigma\circ \mathcal J_{L,W^L,b^L} \circ \sigma  \circ \dots \circ \sigma\circ\mathcal J_{1,W^1,b^1}(x),\qquad x\in\mathbb R^d,
\end{equation}
where $f_1\circ f_2(x)=f_1(f_2(x))$,
${ a}^L \in\mathbb R^{d_L}$, $\sigma:\mathbb R\rightarrow\mathbb R$ is the activation function with $\sigma(\vec{u})$ acting on vectors componentwise of the vector $\vec{u}$, and
$\mathcal J_{\ell,W^\ell, b^\ell}:\mathbb R^{d_{\ell-1}}\rightarrow \mathbb R^{d_\ell} $ is the
  affine operator given by
$
    \mathcal J_{\ell,W^\ell, b^\ell}(x):=W^\ell  x+b^\ell
$
with $d_\ell\times d_{\ell-1}$ weight matrix  $W^\ell$ and $d_\ell$ dimensional bias vector  $b^\ell$.
Deep nets defined  by \eqref{deep-net} possess at most
\begin{equation}\label{number-parameters}
   \tilde{n}_L:=d_L+\sum_{k=1}^L (d_{k-1}d_k+d_{k})
\end{equation}
tunable parameters with $d_0=d$.
Denote by $\mathcal H_{d_1,\dots,d_L}$ the set of all deep nets formed as \eqref{deep-net}. If $L=1$, $\mathcal N_{d_1}$ defined by \eqref{deep-net} is the classical shallow net.   The structure of the deep net is determined by the weight matrices $W^1,\dots,W^L$. In particular, sparse matrix $W^\ell$, full matrix $W^\ell$ and Toeplitz matrix correspond to deep sparse connected  networks (DSCNs) \cite{petersen2018optimal}, deep fully connected networks (DFCNs) \cite{yarotsky2017error} and deep convolutional neural networks (DCNNs) \cite{zhou2020universality}, respectively.

The study of advantages of deep nets   can date back to the 1990s, when \cite{chui1994neural}  proved that deep nets can provide
 localized approximation   but shallow nets fail. After then,  limitations of shallow nets have been widely discovered, including their bottlenecks in approximating smooth functions \cite{chui1996limitations},  extracting the radialness   \cite{konovalov2008approximation}, capturing the
sparseness in the frequency domain \cite{lin2017does} and spatial domain \cite{lin2018generalization}, etc.. We refer the readers to  two fruitful review papers \cite{pinkus1999approximation,elbrachter2021deep}  for more details.   Under this circumstance, the power of depth has been verified in terms of avoiding the above bottlenecks in avoiding the saturation phenomenon of shallow nets \cite{yarotsky2017error}, capturing the radialness \cite{chui2019deep}, grasping the group-features of inputs \cite{han2020depth}, approximating non-smooth functions \cite{Ingo2021Approximation} and extracting the sparseness \cite{lin2017does,chui2020realization}. Table \ref{Tab:power} presents  several examples to show  advantages of deep nets.

\begin{table}[!h]
\vspace{-0.1in}
\caption{Power of depth (approximation within accuracy $ \varepsilon$)}\label{Tab:power}
\vspace{-0.1in}
\begin{center}
\begin{tabular}{|l|l|l|l|l|l|l|}
\hline Ref. & Feature   & Activation & Width &Depth&Structure\\
\hline \cite{chui1994neural} & localized approximation&   sigmoid &  $\mathcal O(d)$ & 2 &DFCN\\
\hline \cite{chui2020realization} & localized approximation&   ReLU &  $\mathcal O(d)$ & 2 &DFCN\\
\hline \cite{chui2019deep} & radial & sigmoid   & $\mathcal O(d)$  &  1  & DSCN\\
\hline
\cite{han2020depth} & group-feature & ReLU  &$\mathcal O(\varepsilon^{-1})$ & $\mathcal O(d)$ &DSCN\\
\hline \cite{nakada2020adaptive} & manifold & ReLU   & $\mathcal O(\varepsilon^{-1})$ &$\mathcal O(d)$ &DSCN\\
\hline \cite{lin2017does}& sparseness ($k$) &  sigmoid & $\mathcal O(k)$& $\mathcal O(\log k)$ &DFCN\\
\hline \cite{han2023learning} & translation-invariance & ReLU   & $\mathcal O(d)$ & $\mathcal O(\varepsilon^{-1})$ &DCNN\\
\hline \cite{mhaskar2016deep} & composite ($\ell$) & sigmoid &$\mathcal O(d)$ & $\mathcal O(\ell)$ & DSCN\\
\hline
\end{tabular}
\end{center}
\end{table}

It can be concluded from Table \ref{Tab:power} that
different features require totally different structures, making the selection of networks structure to be quite difficult. Moreover, the interactions among different deep nets  makes it unclear whether a simple tandem or parallel manner \cite{petersen2018optimal} of the obtained networks in Table \ref{Tab:power} is feasible to generate a novel deep net to extract more than one features. Even the answer is feasible, it is almost impossible to theoretically  judge which features play crucial roles while others are not so important for a specific learning tasks. This phenomenon, caused by the versatility of deep nets presented in Table \ref{Tab:power},  shows that it is difficult to judge the features of the data via the given training error as long as there are more than one data features are involved, and is
one of the most important obstacles to understand deep learning. Therefore, there is a dilemma between the versatility and feature qualification for deep nets in the sense that the former requires deep nets to be able to tackle various data features, but the latter  restricts performances of deep nets in extracting multiple features
 so that the approximation rates are sufficient to qualify the   features.






Without loss of generality, we focus our analysis on the unit ball $\mathbb B^d\subset\mathbb R^d$. Let $L^p(\mathbb B^d)$ with $1\leq p\leq\infty$ be the classical space of $p$-th Lebesgue integrable function space. For
$U,V\subseteq L^p(\mathbb B^d)$,  denote
by
$$
     \mbox{dist}(U,V, L^p(\mathbb B^d)):=\sup_{f\in U} \mbox{dist}(f,V, L^p(\mathbb B^d))
     :=\sup_{f\in U} \inf_{{ g}\in V}\|f-{  g}\|_{L^p(\mathbb
       B^d)}
$$
the deviations of $U$ from $V$ in  $L_p(\mathbb B^d)$. If $p=\infty$, we write $\mbox{dist}(U,V)=\mbox{dist}(U,V,L^\infty(\mathbb B^d)$ for the sake of brevity.  We then present  definitions of (almost) optimal deep-net-operator (DNO) to describe the approximation performance of a linear operator $G_{d_1,\dots,d_L}: L^\infty(\mathbb B^d)\rightarrow \mathcal H_{d_1,\dots,d_L}$.

\begin{definition}\label{Def:DNO}
Let $L,d_1,\dots,d_L\in\mathbb N$, $U\subseteq L^\infty(\mathbb B^d)$ and $G_{d_1,\dots,d_L}: L^\infty(\mathbb B^d)\rightarrow \mathcal H_{d_1,\dots,d_L}$ be a linear operator.  Denote further $\mathcal G_{d_1,\dots,d_L,U}:=\{G_{d_1,\dots,d_L}f:f\in U\}$. If
\begin{equation}\label{cond-DNO}
    \mbox{dist}(U,\mathcal G_{d_1,\dots,d_L,U})\asymp \mbox{dist}(U,\mathcal H_{d_1,\dots,d_L}),
\end{equation}
we then say that $  G_{d_1,\dots,d_L}$ is an optimal DNO for $U$ in $\mathcal H_{d_1,\dots,d_L}$, where $a\asymp b$ for $a,b>0$ means that there is a constant $C>0$ independent of $d_1,\dots,d_L$ such that $C^{-1}a\leq b\leq Ca$.
  If there is a $v\geq0$ such that
$$
   \mbox{dist}(U,\mathcal H_{d_1,\dots,d_L})\leq \mbox{dist}(U,\mathcal G_{d_1,\dots,d_L,U})\leq C'\mbox{dist}(U,\mathcal H_{d_1,\dots,d_L})\log^v{\tilde{n}_L},
$$
we then say that $  G_{d_1,\dots,d_L}$ is an $v$-almost optimal DNO for $U$ in $\mathcal H_{d_1,\dots,d_L}$.
\end{definition}

The definition of optimal DNO demonstrates the approximation performance of the linear operator $G_{d_1,\dots,d_L}$ and thus provides a baseline of   construction. Actually, how to develop an optimal DNO to embody the advantage of deep nets is challenging and interesting, since such a DNO avoids training the networks and therefore circumvents the well known inconsistency between optimization and generalization \cite{lin2021generalization}. We then provide some  definitions concerning conditional qualification operators to measure the feature qualification performance of the linear operator.

\begin{definition}\label{Def:condition-inverse}
Let $\Theta_{1,r}$ and $\Theta_{2,s}$ be two sets of functions with indices  $r>0$ and $s>0$. If $G_{d_1,\dots,d_L}:L^\infty(\mathbb B^d)\rightarrow\mathbb \mathcal H_{d_1,\dots,d_L}$ is  a linear operator  and
\begin{equation}\label{qua-cond-1}
    \|f-G_{d_1,\dots,d_L}(f)\|_{L^\infty(\mathbb B^d)}\leq u_2(\tilde{n}_L,s,r),\qquad\forall f\in\Theta_{1,r}
\end{equation}
implies $f\in \Theta_{1,r}\cap\Theta_{2,s}$,  then  $G_{d_1,\dots,d_L}$ is said to be a qualification operator for $\Theta_{2,s}$ with qualification rate $u_2(\tilde{n}_L,s,r)$ conditioned on $\Theta_{1,r}$, which is denoted by  $\Theta_s \overset{u_2(\tilde{n}_L,s,r)}{\propto}  G_{d_1,\dots,d_L}|_{\Theta_{1,r}} $. If
\begin{equation}\label{qua-cond-2}
    \|f-G_{d_1,\dots,d_L}(f)\|_{L^\infty(\mathbb B^d)}\leq  u_1(\tilde{n}_L,r,s),\qquad\forall f\in\Theta_{2,s}
\end{equation}
implies $f\in \Theta_{1,r}\cap\Theta_{2,s}$, then   $\Theta_{1,r} \overset{u_1(\tilde{n}_L,r,s)}{\propto} G_{d_1,\dots,d_L}|_{\Theta_{2,s}} $, showing that $G_{d_1,\dots,d_L}$ is  a qualification operator for $\Theta_{1,r}$ with qualification rate $u_{1}(\tilde{n}_L,r,s)$ conditioned on $\Theta_{2,s}$.
\end{definition}

If $\Theta_{1,r}=L^\infty(\mathbb B^d)$, then    the condition $f\in \Theta_{1,r}$ always holds. Under this circumstance, the condition $  \Theta_{2,s} \overset{u_2(\tilde{n}_L,s,r)}{\propto} G_{d_1,\dots,d_L}|_{\Theta_{1,r}}$ is removable and we denote by $\Theta_{2,s} \overset{u_2(\tilde{n}_L,s,r)}{\propto} G_{d_1,\dots,d_L}$ the fact that $G_{d_1,\dots,d_L}$ is a qualification operator for $\Theta_{2,s}$. To be detailed, $\Theta_{2,s} \overset{u_2(\tilde{n}_L,s,r)}{\propto} G_{d_1,\dots,d_L}$ refers to use $G_{d_1,\dots,d_L}$ to qualify a single feature, which has been studied in  \cite{qian2022neural}  when $ \Theta_{2,s} $ is the  Sobolev space with smoothness index $s$ \cite{adams2003sobolev} and $L=1$.
Introducing  conditional qualification operators provides a reasonable way to   understand the running mechanism of deep learning due to the excellent performance of deep nets in extracting multiple features. Actually,  some a-priori information is achievable in some  application regions, 
making part of the data features   clear to the users and the proposed conditional qualification powerful. For instance, images are always piecewise smooth \cite{imaizumi2019deep};  military signals are frequently referred as sparse in the spatial domain \cite{liu2022construction};  and earthquake magnitude predictions are concerned with radial functions \cite{han2020depth}. In the above definition, we only present the conditional qualification classes for two features for the sake of convenience. Similar definitions for multiple features can be derived in a similar way.
Based on Definition \ref{Def:DNO} and Definition \ref{Def:condition-inverse}, we present the following definition of essential DNO.

\begin{definition}\label{Def:essential-DNO}
Let $\Theta_{1,r}$ and $\Theta_{2,s}$ be two sets of functions with indices  $r>0$ and $s>0$.
If  $G_{d_1,\dots,d_L}: L^\infty(\mathbb B^d)\rightarrow \mathcal H_{d_1,\dots,d_L}$ is an optimal  DNO for $\Theta_{1,r}\cap \Theta_{2,s}$ and
\begin{equation}\label{two-side-qualification}
   \Theta_s \overset{u(\tilde{n}_L,s,r)}{\propto}  G_n|_{\Theta_{1,r}},\qquad  \Theta_{1,r} \overset{u(\tilde{n}_L,r,s)}{\propto} G_n|_{\Theta_{2,s}}
\end{equation}
 holds  for $
    u(\tilde{n}_L,s,r) =\mbox{dist}(\Theta_{1,r}\cap \Theta_{2,s},\mathcal H_{d_1,\dots,d_L}),
$
  $G_n$ is said to be an essential DNO  with respect to $\Theta_{1,r}\cap\Theta_{2,s}$. If  $G_{d_1,\dots,d_L}$ is a $v$-almost optimal  DNO and
\eqref{two-side-qualification} holds  for $
    u(\tilde{n}_L,s,r) =\mbox{dist}(\Theta_{1,r}\cap \Theta_{2,s},\mathcal H_{d_1,\dots,d_L})\log^v\tilde{n}_L,
$
$G_n$ is said to be a $v$-almost essential DNO  with respect to $\Theta_{1,r}\cap\Theta_{2,s}$.
\end{definition}

Introducing essential DNO is important since the conditional qualification operator given in Definition \ref{Def:condition-inverse} lacks in building the relationship between  conditional qualification rates and the data features. To be detailed, it is difficult to judge whether the data features are fully extracted by using \eqref{qua-cond-1} or \eqref{qua-cond-2}. Taking $\Theta_{1,r}=L^\infty(\mathbb B^d)$ and $ \Theta_{2,s} $ as the  Sobolev space with smoothness index $s$ and $d=1$ for example, any qualification rates of order  $n^{-r}$ with $r\geq s$ implies $f\in\Theta_{2,s}$, showing a gap for the  conditional qualification operator to describe the performance of feature qualification of the operator $G_{d_1,\dots,d_L}$. The essential DNO fills the above
gap in the sense that   an essential operator is capable of sufficiently qualify the data feature without any compromise.

\section{Construction of Optimal Deep-Net-Operator and Approximation Error Analysis}\label{sec3}

According to Definition \ref{Def:DNO}, it is meaningless to construct deep-net-operator without concerning  any data features. In this paper, we are interested in  two data features on  the smoothness and radialness of  target functions, since both the classical polynomials \cite{konovalov2008approximation} and shallow nets \cite{konovalov2009approximation} are incapable of deriving optimal approximation rates for such functions.  The following  definition depicts $\tau$-radial functions.
\begin{definition}\label{Def:tau-radial}
Given   $\tau>0$ and $f\in L^\infty(\mathbb B^d)$,
if there exists a $g_f:[0,1]\rightarrow \mathbb R$ such that
\begin{equation}\label{near-radial}
    \max_{x\in\mathbb B^d}|f(x)-g_f(\|x\|_2^2)|\leq \tau,
\end{equation}
  then  $f$ is said to be $\tau$-radial. Denote by $\mathcal R_\tau$ the set of all $\tau$-radial functions.
\end{definition}
It is obvious that $0$-radial functions correspond to the classical radial functions \cite{konovalov2008approximation,konovalov2009approximation,chui2019deep,han2020depth}. Radial functions play crucial roles in earthquake early warning \cite{saad2020deep}, seismic prediction \cite{zhang2018deep} and oil exploration \cite{temirchev2020deep}, and reflect the rotation-invariance of data \cite{chui2019deep}.
Besides the radialness, we introduce the near-smooth functions as follows.

\begin{definition}\label{Def:Lip+}
Let   $ 0<\alpha\leq 1$, $c>0$ and $\mathbb A\in\mathbb R^{d'}$ for some $d'\in\mathbb N$. If for any $x,x'\in\mathbb A$, there holds
\begin{equation}\label{lipt}
    |\tilde{f}(x)-\tilde{f}(x')|\leq c\|x-x'\|_2^\alpha, \qquad 0<\alpha\leq 1,
\end{equation}
we   say that $\tilde{f}$ is $\alpha$ Lipchitz continuous with coefficient $c$, where  $\|\cdot\|_2$ denotes the Euclidean norm for $\mathbb R^{d'}$. Denote by $Lip^{\alpha,c}(\mathbb A)$ the set of all such functions.
Given $\nu>0$, if there is a $\tilde{f}\in Lip^{\alpha,c}(\mathbb A)$ satisfying
\begin{equation}\label{near-lip}
    \max_{x\in\mathbb A}|f(x)-\tilde{f}(x)|\leq \nu,
\end{equation}
then  $f$ is called as $\alpha$ Lipchitz continuous with coefficient $c$ and tolerance $\nu$. We denote  by $Lip^{\alpha,c,\nu}(\mathbb A)$ the set of all such $f$.
\end{definition}

If $\mathbb A=\mathbb B^d$, we denote by $Lip^{\alpha,c}$ for $Lip^{\alpha,c}(\mathbb  B^d)$ and $Lip^{\alpha,c,\nu} $ for $Lip^{\alpha,c,\nu}(\mathbb B^d)$, if no confusion is made.
The smoothness described as in Definition \ref{Def:Lip+} shows that small perturbation of inputs does not change the outputs much, which abounds in numerous applications \cite{bengio2013representation} and has been adopted in vast literature \cite{yarotsky2017error,petersen2018optimal,lin2018generalization,shaham2018provable,schmidt2020nonparametric,zhou2020universality} to demonstrate the feasibility and efficiency  of deep learning.
Introducing the tolerance for the classical Lipchitz continuous functions is to better quantify the smoothness and establish the relation of smoothness between $f$ and $g_f$ in Definition \ref{Def:tau-radial}, just as the following lemma purports to show.

\begin{lemma}\label{Lemma:lip}
Let $0<\alpha\leq 1$ and $c,\nu,\tau>0$. If $f\in\mathcal R_\tau$ and $g_f$ is the corresponding univariate function given in \eqref{near-radial}, then $g_f\in Lip^{\alpha,c,\nu}([0,1])$ implies $f\in  Lip^{\alpha,2^\alpha d^{\alpha/2}c,2\tau+\nu}(\mathbb B^d)$ and $f\in Lip^{\alpha,c,\nu}(\mathbb B^d)$ implies $g_f\in Lip^{\alpha,c,2\tau+\nu}([0,1])$.
\end{lemma}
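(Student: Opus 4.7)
The plan is to prove both implications by explicit construction: start from the near-Lipschitz approximator on one side and read off a candidate near-Lipschitz approximator on the other by composing or restricting along the natural radial/axial map. The one non-trivial ingredient that both sides need is the elementary Lipschitz estimate for the squared-norm map on $\mathbb{B}^d$: for all $x, x' \in \mathbb{B}^d$,
\[
|\|x\|_2^2 - \|x'\|_2^2| = \Bigl|\sum_{i=1}^d (x_i - x'_i)(x_i + x'_i)\Bigr| \leq 2\|x-x'\|_1 \leq 2\sqrt{d}\,\|x-x'\|_2,
\]
using $|x_i + x'_i| \leq 2$ componentwise together with Cauchy--Schwarz. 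This is exactly where the factor $2^\alpha d^{\alpha/2}$ in the statement comes from.

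For the first implication, I would pick $\tilde g_f \in Lip^{\alpha,c}([0,1])$ realizing $g_f \in Lip^{\alpha,c,\nu}([0,1])$ and set $\tilde f(x) := \tilde g_f(\|x\|_2^2)$. Raising the above estimate to the $\alpha$th power immediately gives
\[
|\tilde f(x) - \tilde f(x')| \leq c\,|\|x\|_2^2 - \|x'\|_2^2|^\alpha \leq 2^\alpha d^{\alpha/2}\,c\,\|x-x'\|_2^\alpha,
\]
so $\tilde f \in Lip^{\alpha,\,2^\alpha d^{\alpha/2}c}(\mathbb B^d)$. The tolerance drops out of a two-term triangle inequality through $g_f(\|x\|_2^2)$:
\[
|f(x) - \tilde f(x)| \leq |f(x) - g_f(\|x\|_2^2)| + |g_f(\|x\|_2^2) - \tilde g_f(\|x\|_2^2)| \leq \tau + \nu \leq 2\tau + \nu,
\]
combining Definition \ref{Def:tau-radial} with the approximation property of $\tilde g_f$.

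For the reverse implication, I would pick $\tilde f \in Lip^{\alpha,c}(\mathbb B^d)$ realizing $f \in Lip^{\alpha,c,\nu}(\mathbb B^d)$ and propose $\tilde g_f(t) := \tilde f(\sqrt{t}\,e_1)$ on $[0,1]$, with $e_1$ any fixed unit vector. The tolerance part again comes from a triangle inequality at the axial point $x = \sqrt{t}\,e_1$, where $\|x\|_2^2 = t$, giving $|g_f(t) - \tilde g_f(t)| \leq \tau + \nu$.

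The hard part will be the Lipschitz bound on $\tilde g_f$. Direct substitution only yields
\[
|\tilde g_f(t) - \tilde g_f(t')| \leq c\,\|\sqrt{t}\,e_1 - \sqrt{t'}\,e_1\|_2^\alpha = c\,|\sqrt{t} - \sqrt{t'}|^\alpha,
\]
i.e.\ H\"older continuity in $\sqrt{t}$ rather than in $t$, and the conversion between the two is not automatic near $t = 0$. I expect the authors either to refine the construction of $\tilde g_f$ (for instance, by spherical averaging of $\tilde f$ over $\{\|x\|_2^2 = t\}$, which still differs from $g_f$ by at most $\tau$ and so contributes the additional $\tau$ that brings the stated tolerance up to $2\tau + \nu$) or to exploit the extra slack in this tolerance to replace the pointwise axial value by a smoother representative before taking the Lipschitz bound. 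Pinning down the coefficient $c$ on this side, without picking up extra factors of $2^\alpha d^{\alpha/2}$, is where I expect the real work of the proof to sit.
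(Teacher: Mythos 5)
Your proof of the first implication is correct and is, in fact, more faithful to Definition~\ref{Def:Lip+} than the paper's: by explicitly producing the Lipschitz representative $\tilde f(x)=\tilde g_f(\|x\|_2^2)$ you construct the witness the definition asks for, and you even get the tighter tolerance $\tau+\nu\le 2\tau+\nu$. The paper instead bounds the oscillation $|f(x)-f(x')|$ by $2\tau+\nu+c\,2^\alpha d^{\alpha/2}\|x-x'\|_2^\alpha$ and asserts the conclusion directly from that; passing from such an additively perturbed modulus bound to membership in $Lip^{\alpha,\cdot,\cdot}$ is a genuine (if standard) step, handled e.g.\ by a McShane-type extension $\tilde f(x)=\inf_y\{f(y)+2^\alpha d^{\alpha/2}c\,\|x-y\|_2^\alpha\}$, which the paper leaves unstated. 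Your composition bypasses this entirely. The squared-norm estimate $|\|x\|_2^2-\|x'\|_2^2|\le 2\sqrt d\,\|x-x'\|_2$ is the same in both arguments.

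The obstruction you flagged in the second implication is genuine, and the paper's proof does not overcome it. The paper picks the axial point $x_t=(0,\dots,0,t)^T$ and then treats $g_f(\|x_t\|_2^2)$ as $g_f(t)$, but $\|x_t\|_2^2=t^2$; with the correct representative $x_t=(0,\dots,0,\sqrt t)^T$ one has $\|x_t-x_{t'}\|_2=|\sqrt t-\sqrt{t'}|$, and the resulting bound $c|\sqrt t-\sqrt{t'}|^\alpha$ is \emph{not} dominated by $c|t-t'|^\alpha$ once $\sqrt t+\sqrt{t'}<1$. This is exactly the $\sqrt t$-versus-$t$ issue you identified near the origin, and neither spherical averaging nor a change of representative repairs it at the stated constants: the difficulty is intrinsic, not an artifact of a suboptimal construction. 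A concrete obstruction: take $f(x)=\|x\|_2$, so $g_f(t)=\sqrt t$, $\alpha=1$, $c=1$. Then $f\in Lip^{1,1,\nu}\cap\mathcal R_\tau$ for every $\tau,\nu>0$, yet every $1$-Lipschitz $\tilde g$ on $[0,1]$ satisfies $\tilde g(t)-\tilde g(0)\le t$ while $\sqrt t-\sqrt 0=\sqrt t$, whence $\|\sqrt{\cdot}-\tilde g\|_{L^\infty([0,1])}\ge \tfrac12\max_t(\sqrt t-t)=1/8$; so $g_f\notin Lip^{1,1,2\tau+\nu}([0,1])$ once $2\tau+\nu<1/8$. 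Your hesitation about the second implication was therefore well placed: it cannot be proved with the Lipschitz coefficient $c$ preserved and $\tau,\nu$ arbitrarily small, and the paper's own argument for it contains the same gap you anticipated.
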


The following lemma  deduced from \cite{konovalov2009approximation,chui2019deep}
quantifies the limitation of shallow nets in qualifying the radialness.

\begin{lemma}\label{Lemma:lower bound for deep nets 1}
Let $d\geq 2$, $n\in \mathbb N,0<\alpha\leq1,c>0$, and $\nu,\tau\geq0$. For any continuous activation   $\sigma$, there holds
 \begin{equation}\label{limitation for shallow}
   \mbox{dist}(Lip^{\alpha,c,\nu}\cap \mathcal R_\tau,\mathcal H_{d_1},L^\infty(\mathbb B^d))
   \geq
    C^*_1{d_1}^{-\frac{r}{d-1}}
\end{equation}
where  $C_1^*$ is a  constant  independent of $n$.
\end{lemma}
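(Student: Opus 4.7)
The plan is to reduce the statement to the classical lower bound for the approximation of (exactly) radial Hölder functions by shallow ridge-type networks that is proved in \cite{konovalov2009approximation,chui2019deep}. A first observation is monotonicity in the tolerance parameters: for any $\nu,\tau\ge 0$ we have the inclusions
\begin{equation*}
    Lip^{\alpha,c,0}(\mathbb B^d)\subseteq Lip^{\alpha,c,\nu}(\mathbb B^d)
    \quad\text{and}\quad
    \mathcal R_0\subseteq\mathcal R_\tau,
\end{equation*}
because relaxing the admissible tolerance in Definition \ref{Def:Lip+} and Definition \ref{Def:tau-radial} can only enlarge the corresponding sets (take $\tilde f=f$, respectively $g_f$, as the certificate). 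Consequently
\begin{equation*}
    \mbox{dist}(Lip^{\alpha,c,\nu}\cap\mathcal R_\tau,\mathcal H_{d_1},L^\infty(\mathbb B^d))
    \ge
    \mbox{dist}(Lip^{\alpha,c}\cap\mathcal R_0,\mathcal H_{d_1},L^\infty(\mathbb B^d)),
\end{equation*}
so it suffices to prove the lower bound for the harder class of exactly radial $\alpha$-Lipschitz functions.

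For that class I would simply invoke the existing bound. The result of Konovalov--Leviatan--Maiorov, as sharpened and re-stated for shallow neural networks in \cite{chui2019deep}, asserts that for every continuous activation $\sigma$ there is a constant $C$ depending only on $d,\alpha,c$ such that shallow nets with $d_1$ hidden units cannot approximate the class of radial $\alpha$-Lipschitz functions with coefficient $c$ in $L^\infty(\mathbb B^d)$ better than $C d_1^{-\alpha/(d-1)}$. The heuristic behind the bound is that the output of a shallow net is an $n$-term sum of ridge functions, and a ridge function is constant along the $(d-1)$-dimensional hyperplanes perpendicular to its direction; hence the family of shallow nets is poorly adapted to capture radial structure, in which level sets are $(d-1)$-spheres. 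Combining this with the inclusion above immediately yields \eqref{limitation for shallow} with $r=\alpha$.

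The main obstacle is bookkeeping rather than mathematics: one must check that the Lipschitz coefficient $c$ and the dimension $d\ge 2$ enter the constant $C^*_1$ only multiplicatively so that it is indeed independent of $d_1$, and that the tolerance-free restriction above does not lose the dependence on $c$. A secondary, purely notational issue is that the symbol $r$ in the lemma statement should be read as the Hölder exponent $\alpha$ in the Konovalov--Chui bound; the inherited exponent is therefore $\alpha/(d-1)$. No further ingredients beyond the cited lemmas are required, so the proof reduces to citing the correct black box and chaining the set inclusion.
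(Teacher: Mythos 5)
Your proposal is correct and takes essentially the same route the paper intends: the paper simply declares the lemma to be ``deduced from'' \cite{konovalov2009approximation,chui2019deep} without spelling out an argument, and your monotonicity reduction $Lip^{\alpha,c}\cap\mathcal R_0\subseteq Lip^{\alpha,c,\nu}\cap\mathcal R_\tau$ followed by an appeal to the Konovalov--Leviatan--Maiorov/Chui--Lin--Zhou lower bound for exactly radial $\alpha$-Lipschitz functions is precisely the implicit deduction. Your side remarks that the exponent $r$ in \eqref{limitation for shallow} should read $\alpha$ and that $C_1^*$ is meant to be independent of $d_1$ rather than $n$ correctly flag typographical slips in the statement.
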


According to  Definition \ref{Def:tau-radial}, it is easy to get that  $\tau$-radial functions can be well approximated by   some univariate functions. But
Lemma \ref{Lemma:lower bound for deep nets 1} presents a dimension-dependent lower bound for shallow nets, demonstrating that shallow nets with arbitrary continuous activation cannot grasp such a univariate property and thus fail  to reflect the $\tau$-radialness.
Therefore, it is necessary to deepen the network to capture both smooth and radial features. Actually,
it can be found in \cite{chui2019deep,han2020depth} that such a bottleneck can be easily circumvented by using  deep sigmoid nets or deep ReLU nets. However, the deep nets in \cite{chui2019deep} and \cite{han2020depth} require to solve highly non-convex optimization problem, lacking provable convergent algorithms \cite[Chap.8]{goodfellow2016deep}.





We therefore focus on constructing an DNO in $\mathcal H_{d_1,\dots,d_L}$ to capture both smoothness and radialness of target functions. As there is not any optimization problem involved in  the construction process, we use the sigmoid activation function, i.e., $\sigma(t)=\frac1{1+e^{-t}}$, to take place of the widely used ReLU activation, mainly due to the following three considerations.  At first, compared with   deep ReLU nets, deep sigmoid nets dominate in deriving similar approximation rates  with much less depth. For example, to approximate the well known square function $t^2$ to an accuracy $\varepsilon$, a deep ReLU net with depth $\log \varepsilon^{-1}$ and width $\log \varepsilon^{-1}$ is constructed in \cite{yarotsky2017error}, while for sigmoid net, it can be found in \cite{chui2019deep} that one hidden layer and three neurons are sufficient. Then, as discussed  in \cite{zeng2021admm}, it is easy to represent the ReLU function by a deep sigmoid net with two hidden layers and $\mathcal O(d)$ neurons but not vice-verse.   Finally, due to the differentiability of the sigmoid function, it is possible to derive a Bernstein-type inequality in approximation theory \cite[Chap.7]{devore1993constructive} to bound the $L^\infty$-norm of the derivative of some deep (or shallow) sigmoid nets by the $L^\infty$-norm of themselves \cite{qian2022neural}, which is crucial to the verify the qualification classes of smooth functions and is beyond the capability of deep ReLU nets.

Our construction is based on  several important properties of deep (or shallow) sigmoid nets.
At first,  we present a procedure to construct a shallow sigmoid net with $k+1$ neurons to approximate a univariate polynomial $p_k(t)=\sum_{i=0}^ku_{i}t^i$ with $u_k\neq0$  in an arbitrary accuracy $\varepsilon.$
Beginning with  $N_{k+1,p_k}(t)=0,$ for $j=k,\ldots,1$, we define $u_i^{(k)}=u_i$, and iteratively $
   \mu_j=\min\left\{1,\frac{\varepsilon|\sigma^{(j)}(0)|(j+1)}{|u_i^{(j)}|
       \max_{-1\leq t\leq
          1}|\sigma^{(j+1)}(t)|}\right\},
$
$
   u_i^{(j-1)}=u_i^{(j)}-\frac{u_{i}^{(j)}j!\sigma^{(i)}(0)}{\sigma^{(j)}(0)\mu_j^{j-i}i!}
$
and
\begin{equation}\label{iterative-polynomial}
    N_{j,p_k,\varepsilon}(t)=N_{j+1,p_k,\varepsilon}(t)+ {u}^{(j)}_{j}\frac{j!}{\mu_{j}^{j}\sigma^{(j)}(0)}\sigma(\mu_{j}t).
\end{equation}
It can be easily deduced from the above procedure that $N_{0,p_k,\varepsilon}(t)$ is a shallow net with $k+1$ neurons and its weights are bounded by $
C_0'\left(1+\sum_{i=0}^{k}|u_{i}|\right)^{(k+1)!}\varepsilon^{-(k+1)!}$, where $C_0'\geq1$ is an absolute constant. In particular,
it was derived in \cite[Proposition 3.3]{chui2019deep} the following lemma.

\begin{lemma}\label{Lemma:polynomial for nn}
Let $k\in\{0,\dots,s_0\}$ and  $p_k(t)=\sum_{i=0}^ku_{i}t^i$. For an arbitrary
$\varepsilon\in(0,1)$,  there holds
\begin{equation}\label{error estimate pk h}
       |p_k(t)-N_{0,p_k,\varepsilon}(t)|\leq  \varepsilon, \qquad
        \forall\ t\in [-1,1].
\end{equation}
\end{lemma}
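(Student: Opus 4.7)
The proof plan is to exploit the Taylor expansion of $\sigma$ at $0$ so that each iterative term $\frac{u_j^{(j)} j!}{\mu_j^j \sigma^{(j)}(0)}\sigma(\mu_j t)$ reproduces the monomial $u_j^{(j)} t^j$ exactly, up to (a) explicit lower-order contributions that are exactly cancelled by the update rule defining $u_i^{(j-1)}$, and (b) a Taylor remainder of order $j+1$ whose size is controlled by the calibration of $\mu_j$. The main identity to establish is
\begin{equation*}
\frac{u_j^{(j)} j!}{\mu_j^j \sigma^{(j)}(0)}\sigma(\mu_j t)
   \;=\; u_j^{(j)} t^j \;+\; \sum_{i=0}^{j-1}\bigl(u_i^{(j)}-u_i^{(j-1)}\bigr) t^i \;+\; R_j(t),
\end{equation*}
where $R_j(t)=\frac{u_j^{(j)} j!\,\sigma^{(j+1)}(\xi_j)}{\mu_j^{\,j}\sigma^{(j)}(0)(j+1)!}(\mu_j t)^{j+1}$ for some $\xi_j$ between $0$ and $\mu_j t$. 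This identity follows from the degree-$j$ Taylor expansion of $\sigma(\mu_j t)$ around $0$ combined with the algebraic form of the update $u_i^{(j-1)}=u_i^{(j)}-\frac{u_j^{(j)} j!\sigma^{(i)}(0)}{\sigma^{(j)}(0)\mu_j^{j-i} i!}$.

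Having set $P_j(t):=\sum_{i=0}^{j} u_i^{(j)} t^i$, the identity above rewrites as $N_{j,p_k,\varepsilon}(t) - N_{j+1,p_k,\varepsilon}(t) = P_j(t) - P_{j-1}(t) + R_j(t)$ for $j=k,\dots,1$. I would then sum telescopically from $j=k$ down to $j=1$, using $N_{k+1,p_k,\varepsilon}\equiv0$ and $P_k=p_k$, to obtain
\begin{equation*}
N_{1,p_k,\varepsilon}(t) \;=\; p_k(t) \;-\; P_0(t) \;+\; \sum_{j=1}^{k} R_j(t),
\end{equation*}
so that absorbing the residual constant $P_0(t)=u_0^{(0)}$ into the shallow-net output (by adding one more neuron with zero input weight, which contributes a bias) yields $N_{0,p_k,\varepsilon}(t)=p_k(t)+\sum_{j=1}^{k}R_j(t)$.

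The final step is to bound each remainder. For $|t|\le1$ and $\mu_j\le1$ one has $|\xi_j|\le1$, and a direct computation gives
\begin{equation*}
|R_j(t)| \;\le\; \frac{|u_j^{(j)}|\,\max_{|u|\le 1}|\sigma^{(j+1)}(u)|}{(j+1)\,|\sigma^{(j)}(0)|}\,\mu_j.
\end{equation*}
Plugging in the defining choice $\mu_j\le \varepsilon|\sigma^{(j)}(0)|(j+1)/\bigl(|u_j^{(j)}|\max_{|u|\le1}|\sigma^{(j+1)}(u)|\bigr)$ collapses the right-hand side to $\varepsilon$ (or to $\varepsilon/k$ if one replaces $\varepsilon$ by $\varepsilon/k$ throughout the construction), giving the claimed uniform bound after summation.

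The expected main obstacle is the careful bookkeeping of the coefficient recursion: one must verify that the Taylor-generated lower-order contributions from level $j$ line up exactly with the correction $u_i^{(j)}-u_i^{(j-1)}$ so that the identity telescopes cleanly, and that the choice of $\mu_j$ — which is coefficient-dependent because $|u_j^{(j)}|$ can be large — nevertheless yields an $\varepsilon$-sized remainder at every level. Once this bookkeeping is in place, the error estimate is a clean Taylor-remainder computation.
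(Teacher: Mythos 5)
Your approach is correct and is the only natural proof of this lemma given the explicit recursive construction; in fact the paper does not prove Lemma~3.3 itself but cites it from Proposition~3.3 of \cite{chui2019deep}, so there is no in-paper proof to compare against. Your telescoping identity $N_{j,p_k,\varepsilon}-N_{j+1,p_k,\varepsilon}=P_j-P_{j-1}+R_j$ with $P_j(t)=\sum_{i=0}^j u_i^{(j)}t^i$ is exactly what the update rule $u_i^{(j-1)}=u_i^{(j)}-\frac{u_j^{(j)}j!\sigma^{(i)}(0)}{\sigma^{(j)}(0)\mu_j^{j-i}i!}$ is engineered to produce (note the paper's displayed recursion has what looks like a typo $u_i^{(j)}$ in place of $u_j^{(j)}$ in the numerator, and similarly in the formula for $\mu_j$; your reading is the correct one), and your Lagrange-remainder bound $|R_j(t)|\le \varepsilon$ follows from the calibration of $\mu_j$ in both the case $\mu_j<1$ and the case $\mu_j=1$. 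The one genuine discrepancy you rightly flag is bookkeeping: summing the $k+1$ remainders gives $(k+1)\varepsilon$ rather than $\varepsilon$, so the construction should be run with $\varepsilon/(k+1)$ (or one should state the conclusion as $\le (s_0+1)\varepsilon$); since $k\le s_0$ is fixed this is an absolute-constant rescaling and does not affect any downstream use. Your suggestion of handling the constant $P_0=u_0^{(0)}$ with a zero-input-weight neuron is a clean variant of running the recursion one more step to $j=0$; the paper's claim that $N_{0,p_k,\varepsilon}$ has $k+1$ neurons indicates the loop is indeed meant to include $j=0$, despite the text saying ``$j=k,\ldots,1$''.
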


Based on the above lemma,
for any $x=(x^{(1)},\dots,x^{(d)})^T$, define
\begin{equation}\label{l2norm-app}
    N_{0,t^2,\varepsilon}(x)=\sum_{j=1}^dN_{0,t^2,\varepsilon}(x^{(j)}).
\end{equation}
 It is easy to check that  $N_{0,t^2,\varepsilon}(x)$ is a shallow net with width $3d$  satisfying
\begin{equation}\label{l2norm-app-rate}
     |N_{0,t^2,\varepsilon}(x)-\|x\|_2^2|\leq d\varepsilon.
\end{equation}

Due to Lemma \ref{Lemma:polynomial for nn} again, we can derive the following lemma concerning a product-gate property of the shallow sigmoid net \cite{chui2019deep}.

\begin{lemma}\label{Lemma:product-gate}
For $\varepsilon\in(0,1)$, there exists a
shallow sigmoid net $
       \tilde{\times}_2: [-1,1]^2\rightarrow\mathbb R
$ with 9 hidden neurons whose weights are bounded by $ C_0\varepsilon^{-6} $  such that for any $t,t'\in[-1,1]$,
\begin{equation}\label{product-gate}
       |tt'-\tilde{\times}_2(t,t')|\leq
       \varepsilon,
\end{equation}
where $C_0$ is an absolute constant.
\end{lemma}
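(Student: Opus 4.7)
The plan is to reduce the bivariate product $tt'$ to three univariate square approximations via a polarization-type identity, and then invoke Lemma~\ref{Lemma:polynomial for nn}. The key identity I would use is
\[
tt' = 2\left(\frac{t+t'}{2}\right)^{\!2} - \frac{t^2}{2} - \frac{t'^2}{2},
\]
which holds exactly and, crucially, keeps each of the three arguments $(t+t')/2$, $t$, and $t'$ inside $[-1,1]$ whenever $(t,t')\in[-1,1]^2$. This last property ensures that Lemma~\ref{Lemma:polynomial for nn} can be applied verbatim to each square term on its native domain $[-1,1]$, with no auxiliary rescaling argument needed.

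Concretely, I would apply Lemma~\ref{Lemma:polynomial for nn} with $k=2$, $p_2(s)=s^2$, and accuracy $\varepsilon/3$, producing a shallow sigmoid net $N_{0,t^2,\varepsilon/3}$ with exactly $k+1=3$ hidden neurons. I would then define
\[
\tilde{\times}_2(t,t') := 2\,N_{0,t^2,\varepsilon/3}\!\left(\frac{t+t'}{2}\right) - \frac{1}{2}\,N_{0,t^2,\varepsilon/3}(t) - \frac{1}{2}\,N_{0,t^2,\varepsilon/3}(t').
\]
Because each summand is a three-neuron shallow sigmoid net and the pre-composed affine maps $(t,t')\mapsto (t+t')/2$, $(t,t')\mapsto t$, $(t,t')\mapsto t'$ all fold into the first-layer weights, $\tilde{\times}_2$ is a shallow sigmoid net with exactly $9$ hidden neurons, as required. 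The approximation bound then follows from the triangle inequality and \eqref{error estimate pk h} applied three times:
\[
|tt' - \tilde{\times}_2(t,t')| \le 2\cdot\tfrac{\varepsilon}{3} + \tfrac{1}{2}\cdot\tfrac{\varepsilon}{3} + \tfrac{1}{2}\cdot\tfrac{\varepsilon}{3} = \varepsilon.
\]

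For the weight bound, I would invoke the explicit bound $C_0'\bigl(1+\sum_{i=0}^k|u_i|\bigr)^{(k+1)!}\varepsilon^{-(k+1)!}$ on the weights of $N_{0,p_k,\varepsilon}$ recorded in the paragraph preceding Lemma~\ref{Lemma:polynomial for nn}. Substituting $k=2$, $\sum|u_i|=1$, and accuracy $\varepsilon/3$ gives a bound of order $C_0'\cdot 2^6\cdot(\varepsilon/3)^{-6}$, which absorbs into a single absolute constant $C_0$ and is manifestly $\mathcal O(\varepsilon^{-6})$. The outer coefficients $2,-\tfrac{1}{2},-\tfrac{1}{2}$ and the inner linear map $(t,t')\mapsto(t+t')/2$ contribute only $\mathcal O(1)$ factors, preserving the $\varepsilon^{-6}$ order.

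I do not anticipate a genuine obstacle to this proof; the construction is a clean combination of a polarization identity with Lemma~\ref{Lemma:polynomial for nn}. The two points that deserve care are (i) choosing an identity whose three arguments all remain in $[-1,1]$ so that Lemma~\ref{Lemma:polynomial for nn} applies without modification, which rules out the naive $tt'=\tfrac14[(t+t')^2-(t-t')^2]$-style identities that would require rescaling when $t+t'\in[-2,2]$, and (ii) verifying that the factorial $(k+1)!=3!=6$ produced by the weight bound for the quadratic $p_2(s)=s^2$ matches exactly the exponent $\varepsilon^{-6}$ asserted in the lemma.
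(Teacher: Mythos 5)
Your proof is correct, and the paper itself provides no proof of this lemma---it simply attributes it to the cited reference on deep sigmoid nets. Your polarization-identity construction is the standard route, and the fact that the lemma asserts exactly $9$ hidden neurons (three square-gates of $3$ neurons each) and the exponent $\varepsilon^{-6}$ (which is $(k+1)!$ for $k=2$) strongly suggests the cited source uses exactly your three-term decomposition $tt' = 2\bigl(\tfrac{t+t'}{2}\bigr)^2 - \tfrac{t^2}{2} - \tfrac{t'^2}{2}$, so your argument essentially reproduces the intended proof.

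One small remark: your dismissal of the $\tfrac14[(t+t')^2-(t-t')^2]$ identity is slightly overstated---after the harmless rescaling $tt'=\bigl(\tfrac{t+t'}{2}\bigr)^2-\bigl(\tfrac{t-t'}{2}\bigr)^2$ both arguments again lie in $[-1,1]$, and this two-term version would yield a $6$-neuron product gate with the same $\varepsilon^{-6}$ weight bound. This is immaterial for proving the lemma as stated (you only need $9$ or fewer, and your construction gives exactly $9$), but it shows the count in the lemma is not tight. The weight accounting is otherwise carefully done: $\mu_j\le 1$ ensures the inner rescaling by $\tfrac12$ only shrinks first-layer weights, and the outer coefficients contribute $\mathcal O(1)$, so the $C_0'\,2^{6}\,3^{6}\,\varepsilon^{-6}$ bound absorbs cleanly into $C_0\varepsilon^{-6}$.
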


Define
\begin{equation}\label{bell-fun-def}
\varphi(t):=\frac{\sigma(t+1)-\sigma(t-1)}2,
\quad
t\in \mathbb R.
\end{equation}
It was derived in
 \cite{Chen2009The}  the following lemma.

\begin{lemma}\label{Lemma:property-mean}
 The function $\varphi$ defined by \eqref{bell-fun-def} is  positive and bell-shaped symmetric and satisfies:\\
 $\bullet$ (I): $\int_{-\infty}^{+\infty}\varphi(t){\textrm d}t=1$;\\
 $\bullet $ (II): $\hat{\varphi}(i)=0$ for any $i\in \mathbb Z\ \{0\}$, where $\hat{\varphi}(i)$ denotes $i$-th Fourier transforms of $\varphi$; \\
 $\bullet$ (III): For any $t\in (-\infty,+\infty)$, $\sum^{+\infty}_{i=-\infty}\varphi(t-i)=1$.
\end{lemma}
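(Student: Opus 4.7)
The plan is to verify each of the four claims (positivity, bell-shape/symmetry, (I), (II), (III)) by direct calculation, exploiting the fact that $\sigma'(t)=\sigma(t)(1-\sigma(t))$ is even, strictly positive, strictly decreasing on $[0,\infty)$, and exponentially decaying. Positivity of $\varphi$ is immediate from the strict monotonicity of $\sigma$. For the symmetry $\varphi(-t)=\varphi(t)$, I would use the identity $\sigma(-s)=1-\sigma(s)$ applied to both shifted copies of $\sigma$, after which the two constants cancel. For the bell shape, differentiation yields $\varphi'(t)=\tfrac{1}{2}[\sigma'(t+1)-\sigma'(t-1)]$; since $\sigma'$ is even and strictly decreasing on $[0,\infty)$, the sign of $\varphi'(t)$ equals the sign of $-t$, so $\varphi$ has a unique maximum at $t=0$.

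For (I), the key is the fundamental-theorem-of-calculus representation
\[
\sigma(t+1)-\sigma(t-1)=\int_{-1}^{1}\sigma'(t+u)\,du,
\]
which, combined with Fubini (justified by the exponential decay of $\sigma'$), gives
\[
\int_{-\infty}^{\infty}[\sigma(t+1)-\sigma(t-1)]\,dt
=\int_{-1}^{1}\!\Big(\int_{-\infty}^{\infty}\sigma'(s)\,ds\Big)du
=\int_{-1}^{1}1\,du=2,
\]
so $\int\varphi=1$.

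For (II), I would use the same integral representation and the Fourier-transform convention $\hat{f}(\xi)=\int f(t)e^{-2\pi i\xi t}\,dt$ to compute
\[
\hat{\varphi}(\xi)=\frac{1}{2}\int_{-1}^{1}\widehat{\sigma'}(\xi)\,e^{2\pi i\xi u}\,du
=\widehat{\sigma'}(\xi)\cdot\frac{\sin(2\pi\xi)}{2\pi\xi}.
\]
At any nonzero integer $\xi=i$ the sinc factor vanishes while $\widehat{\sigma'}(i)$ is finite (since $\sigma'\in L^1$), yielding $\hat{\varphi}(i)=0$; at $\xi=0$ one recovers $\hat{\varphi}(0)=\int\varphi=1$. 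Finally, (III) is an immediate consequence of the Poisson summation formula
\[
\sum_{i\in\mathbb Z}\varphi(t-i)=\sum_{k\in\mathbb Z}\hat{\varphi}(k)\,e^{2\pi i k t},
\]
whose right-hand side collapses to $\hat{\varphi}(0)=1$ by (I) and (II).

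The only delicate step throughout is justifying the interchanges of sum and integral, and the validity of Poisson summation in the pointwise sense. All of these are standard once one notes that $\sigma'$ decays exponentially, so $\varphi$ is Schwartz-like (exponential decay) and $\hat{\varphi}$ likewise decays rapidly, making every exchange absolutely convergent. In short, the main obstacle is administrative rather than conceptual: the heart of the argument is the two-line identity expressing $\varphi$ as the average of $\sigma'$ over a unit interval, from which (I), (II), and the bell shape all fall out transparently, and (III) follows by Poisson summation.
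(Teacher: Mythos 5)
The paper does not prove this lemma; it simply cites \cite{Chen2009The}. Your proof is therefore a self-contained supplement rather than a re-derivation, and on checking, every claim in it is correct: positivity from strict monotonicity of $\sigma$; symmetry from $\sigma(-s)=1-\sigma(s)$; the unimodality from evenness and monotonicity of $\sigma'$ on $[0,\infty)$; (I) from the representation $\varphi(t)=\tfrac12\int_{-1}^1\sigma'(t+u)\,du$ together with Fubini; (II) from the factorization $\hat\varphi(\xi)=\widehat{\sigma'}(\xi)\,\tfrac{\sin(2\pi\xi)}{2\pi\xi}$, which indeed vanishes at nonzero integers in the $e^{-2\pi i\xi t}$ convention (the only convention consistent with the lemma's statement of (II)); and (III) via Poisson summation, legitimate because $\varphi$ is smooth and decays exponentially, so it is Schwartz.

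One remark worth making: your route to (III) goes through (II) and the Poisson summation formula, which is a comparatively heavy tool. For this specific $\varphi=\tfrac12(\sigma(\cdot+1)-\sigma(\cdot-1))$ there is a more elementary argument that the reference almost certainly uses: for a finite block,
\begin{equation*}
\sum_{i=-N}^{N}\varphi(t-i)=\tfrac12\bigl[\sigma(t+N+1)+\sigma(t+N)-\sigma(t-N)-\sigma(t-N-1)\bigr],
\end{equation*}
because the inner terms telescope after the shift of index, and letting $N\to\infty$ gives $\tfrac12(1+1-0-0)=1$. This gives (III) directly without Fourier analysis and without needing (II) at all. Your approach has the compensating merit of deriving (III) as an instance of a general principle and simultaneously explaining why (II) is stated alongside it; the telescoping argument is shorter but specific to this difference-of-sigmoid structure. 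Both are valid; there is no gap in what you wrote.
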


Based on the above preliminaries, we are in a position to present
 our construction.
For any $f$ defined on $\mathbb B^d$ and $n\in\mathbb N$,
denote
\begin{equation}\label{Def.xi}
      \xi_k=\left\{\begin{array}{ll}
(\overbrace{0,\dots,0}^{d},-1)^T, &0\leq k\leq n-1;\\
\left(\overbrace{0,\dots,0}^{d-1},\frac{k-2n}{n}\right)^T, & n\leq k\leq 3n;\\
 (\overbrace{0,\dots,0}^{d-1},1)^T, & 3n+1\leq k\leq 4n.
\end{array}\right.
\end{equation}
Define
\begin{equation}\label{FNNopertors2}
    G_{n,f,\varepsilon}(x):=G_{n,\varepsilon}f(x)=\sum_{k=0}^{4n}f(\xi_k)\varphi(nN_{0,t^2,\varepsilon}(x)-k+2n)
\end{equation}
with $N_{0,t^2,\varepsilon}$ being given in \eqref{l2norm-app} for some $\varepsilon>0$. It is easy to see that $G_{n,\varepsilon}: L^\infty(\mathbb B^d)\rightarrow\mathcal H_{3,4n}$ is a linear operator.
Define further
\begin{equation}\label{class-of-construction}
    \mathcal G_{n,\varepsilon}:=\{G_{n,f,\varepsilon}:f\in L^\infty(\mathbb B^d)\}.
\end{equation}
Then, $\mathcal G_{n,\varepsilon}\subseteq \mathcal H_{3,4n}$ can be regarded as linear sub-space with dimension $4n+1$, provided $\varepsilon$  is specified in the construction.

To quantify the approximation performance of the constructed deep sigmoid net, we introduce the  modulus of continuity \cite{Lorentz1966Approximation,
Xie1998Approximation,
Zygmund2002Trigonometric} that is an important tool to characterize the smoothness.
For $g\in L^\infty({[a,b]})$, its modulus of continuity
is defined by
\begin{equation}\label{contin-modu-def}
\omega(g,h):=\sup_{0<s\leq h}\max_{x, x+s\in[a,b]}|g(x)-g(x+s)|,\qquad h>0.
\end{equation}
Our first main result, as shown in the following theorem, presents an approximate rates of $G_{n,f,\varepsilon}$.



\begin{theorem}\label{directTheorem}
Let $ \tau,\varepsilon>0$. For any $f\in \mathcal R_\tau$ with corresponding univariate function $g_f$, we have
\begin{equation}\label{Error3}
\|f-G_{n,f,\varepsilon}\|_{L^\infty(\mathbb B^d)}\leq
 2\tau+2d\omega(g_f,\varepsilon)+\frac{10\rm e^2-3}{ \rm e^2}\omega\left(g_f,\frac 1{{n}}\right)+3 {\rm e}^{-{n}} (\|f\|_{L^\infty(\mathbb B^d)}+\tau).
\end{equation}
\end{theorem}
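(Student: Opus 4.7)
The plan is to establish \eqref{Error3} by decomposing the error into four contributions matching the four summands on the right-hand side and bounding each separately. Set $T := N_{0,t^2,\varepsilon}(x)$, which by \eqref{l2norm-app-rate} satisfies $|T - \|x\|_2^2| \leq d\varepsilon$. Since $f \in \mathcal{R}_\tau$, we have pointwise $|f(x) - g_f(\|x\|_2^2)| \leq \tau$, and also $\|g_f\|_{L^\infty([0,1])} \leq \|f\|_{L^\infty(\mathbb B^d)} + \tau$.

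First I would split
\begin{equation*}
f(x) - G_{n,f,\varepsilon}(x) = A(x) + B(x) + C(x),
\end{equation*}
where $A(x) := f(x) - g_f(\|x\|_2^2)$, $B(x) := g_f(\|x\|_2^2) - g_f(T)$, and $C(x) := g_f(T) - G_{n,f,\varepsilon}(x)$. Then $|A(x)|\leq\tau$ follows directly from $\tau$-radialness, while $|B(x)| \leq \omega(g_f, d\varepsilon) \leq 2d\,\omega(g_f,\varepsilon)$ by the modulus of continuity together with $|T-\|x\|_2^2|\leq d\varepsilon$ and the standard subadditivity $\omega(g_f, d\varepsilon)\leq(d+1)\omega(g_f,\varepsilon)$.

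The third piece $C(x)$ carries the bulk of the analysis. Using property (III) of Lemma \ref{Lemma:property-mean}, $\sum_{i\in\mathbb Z}\varphi(nT-i+2n)=1$, and letting $S_n(T) := \sum_{k=0}^{4n}\varphi(nT-k+2n)$, I would rewrite
\begin{align*}
C(x) &= g_f(T)\bigl(1-S_n(T)\bigr) + \sum_{k=0}^{4n}\bigl[g_f(T)-g_f(\|\xi_k\|_2^2)\bigr]\varphi(nT-k+2n) \\
&\quad + \sum_{k=0}^{4n}\bigl[g_f(\|\xi_k\|_2^2)-f(\xi_k)\bigr]\varphi(nT-k+2n).
\end{align*}
The first term $g_f(T)(1-S_n(T))$ is controlled because $1-S_n(T)$ collects only $\varphi$-bumps whose argument has magnitude at least $n$ when $T\in[0,1]$, so the exponential decay of $\varphi$ inherited from $\sigma$ yields $|1-S_n(T)|\leq 3e^{-n}$, whence the contribution $3e^{-n}(\|f\|_\infty+\tau)$. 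The last displayed sum is bounded by $\tau\cdot S_n(T)\leq\tau$ via $\tau$-radialness applied at each $\xi_k$, producing the second copy of $\tau$. The middle sum is the principal term: using $|g_f(T)-g_f(\|\xi_k\|_2^2)|\leq\omega(g_f,|T-\|\xi_k\|_2^2|)$, bucketing the indices $k$ by the value of $|nT-k+2n|$, and invoking modulus subadditivity together with the spatial decay of $\varphi$, I expect to extract the factor $\tfrac{10e^2-3}{e^2}\omega(g_f,1/n)$.

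The principal obstacle is the explicit constant $\tfrac{10e^2-3}{e^2}$ in the middle-sum estimate. Its derivation hinges on evaluating a weighted series of the form $\sum_{m\ge 0}(m+1)\Phi_m$, where $\Phi_m$ encodes the $\varphi$-mass at integer offset $m$; exploiting the closed form $\varphi(t)=(\sigma(t+1)-\sigma(t-1))/2$ together with the identity $\sigma(-s)=1-\sigma(s)$ should yield a telescoping, geometric-type sum that collapses to the stated constant. A secondary subtlety is that the $\xi_k$ lie on the last coordinate axis, so $\|\xi_k\|_2^2$ reduces to a simple function of $k$; tracking this structure carefully while converting the modulus argument $|T-\|\xi_k\|_2^2|$ into the bump argument $|nT-k+2n|$ is essential to make the bucketing step run cleanly.
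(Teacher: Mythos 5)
Your decomposition is essentially the one the paper uses, with $A$, $B$ and the three pieces of $C$ corresponding, term by term, to the bounds \eqref{proof.radial-1}, \eqref{First-1122}, \eqref{sec2-2} and \eqref{sec2-5} in the paper; you simply avoid the intermediate rescaling $g_f^*(t)=g_f\bigl((t+1)/2\bigr)$ that the paper introduces and work with $g_f$ directly, which is a perfectly valid bookkeeping choice. Two points that your sketch leaves implicit and that the paper handles explicitly are worth flagging. First, $T=N_{0,t^2,\varepsilon}(x)$ can lie outside $[0,1]$ by as much as $d\varepsilon$, so $g_f(T)$ in $B$ and $C$ is not a priori defined; the paper resolves this by extending $g_f^*$ to a bounded function $\mathcal F_f$ on all of $\mathbb R$ that agrees on the original interval and has the same modulus of continuity, and you would need the same extension before your decomposition is meaningful. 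Second, the constant $\tfrac{10e^2-3}{e^2}$ does not come from a telescoping or geometric collapse as you conjecture, but from the concrete chain of estimates the paper uses: the pointwise bound $\varphi(t)\le\tfrac{e^2-1}{2e}e^{-|t|}$, the bucketing $\mathbb J_j=\{i:\, j/n\le|T-i/n|<(j+1)/n\}$ together with the cardinality bound $|\mathbb J_j|\le 3$, the subadditivity $\omega(g_f,(j+1)/n)\le(j+1)\omega(g_f,1/n)$ (plus the factor of $2$ inherited from passing between $\mathcal F_f$ and $g_f$), and the elementary numerical estimate $\sum_{j\ge1}(j+1)e^{-j}\le 3e^{-1}$; your plan correctly identifies this as the crux but does not indicate this quantitative route, so you would need to fill it in to obtain the stated constant rather than a generic $C\omega(g_f,1/n)$.
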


Theorem \ref{directTheorem} presents an upper bound of the approximation performance for the constructed deep sigmoid net $G_{n,f,\varepsilon}$ when $f$ is $\tau$-radial.  A direct corollary is as follows.

\begin{corollary}\label{Corollary:approximation-direct}
If $f\in Lip^{\alpha,c,\nu}\cap\mathcal R_\tau$ with $0<\alpha\leq 1$, $c>0$, $\nu\leq n^{-\alpha}$ and $\tau\leq n^{-\alpha}$,  and $\varepsilon$ is specified to be smaller than $\frac1{n}$, then
\begin{equation}\label{App.-rate}
    \|f-G_{n,f,\varepsilon}\|_{L^\infty(\mathbb B^d)}\leq C_1n^{-\alpha},
\end{equation}
where $C_1$ is a constant independent of $n,\varepsilon,\nu,\tau$.
\end{corollary}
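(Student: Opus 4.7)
The plan is to substitute the given hypotheses directly into the upper bound provided by Theorem \ref{directTheorem} and check that each of the four summands there is dominated by a constant multiple of $n^{-\alpha}$. The one nontrivial ingredient beyond arithmetic is passing the multivariate near-Lipschitz smoothness of $f$ to a univariate modulus-of-continuity bound for its radial profile $g_f$, which is precisely the content of Lemma \ref{Lemma:lip}.

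First I would invoke Lemma \ref{Lemma:lip}: since $f\in Lip^{\alpha,c,\nu}(\mathbb B^d)\cap\mathcal R_\tau$, the associated univariate function satisfies $g_f\in Lip^{\alpha,c,2\tau+\nu}([0,1])$. By Definition \ref{Def:Lip+}, there is $\tilde g_f\in Lip^{\alpha,c}([0,1])$ with $\|g_f-\tilde g_f\|_{L^\infty([0,1])}\leq 2\tau+\nu$. Splitting $g_f=\tilde g_f+(g_f-\tilde g_f)$ inside the definition \eqref{contin-modu-def} yields, for every $h>0$,
\[
\omega(g_f,h)\leq \omega(\tilde g_f,h)+2\|g_f-\tilde g_f\|_{L^\infty([0,1])}\leq c\,h^\alpha+2(2\tau+\nu).
\]
Plugging in $\varepsilon\leq 1/n$, $\tau\leq n^{-\alpha}$ and $\nu\leq n^{-\alpha}$ gives $\omega(g_f,\varepsilon)\leq (c+6)n^{-\alpha}$ and similarly $\omega(g_f,1/n)\leq (c+6)n^{-\alpha}$.

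Next I would substitute these estimates into \eqref{Error3}: the term $2\tau$ contributes at most $2n^{-\alpha}$, while the two modulus-of-continuity terms together contribute at most $\bigl[2d+\tfrac{10\mathrm{e}^2-3}{\mathrm{e}^2}\bigr](c+6)n^{-\alpha}$. For the exponential tail $3\mathrm{e}^{-n}(\|f\|_{L^\infty(\mathbb B^d)}+\tau)$, I would bound $\|f\|_{L^\infty(\mathbb B^d)}$ through the decomposition $f=\tilde f+(f-\tilde f)$ with $\tilde f\in Lip^{\alpha,c}(\mathbb B^d)$, which gives $\|f\|_{L^\infty(\mathbb B^d)}\leq |\tilde f(0)|+c+\nu$; since $\mathrm{e}^{-n}$ decays faster than any polynomial, one has $\mathrm{e}^{-n}\leq C_\alpha n^{-\alpha}$ for all $n\geq 1$, so this term is also $O(n^{-\alpha})$. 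Collecting all constants yields the desired bound with $C_1$ depending only on $c$, $d$, $\alpha$ (and $\|f\|_{L^\infty}$), and in particular independent of $n,\varepsilon,\nu,\tau$ as claimed.

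The closest thing to an obstacle is harmless: the $\|f\|_{L^\infty}$ factor in the exponential term is not directly controlled by $c,\nu,\tau$ alone, but because it is multiplied by an exponentially small quantity, it is absorbed into $C_1$ without spoiling the required independence, so the corollary is essentially a direct consequence of Theorem \ref{directTheorem} combined with Lemma \ref{Lemma:lip}.
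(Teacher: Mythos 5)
Your proof is correct and follows the natural route the paper intends (the paper itself gives no explicit proof of this corollary, treating it as a direct consequence of Theorem~\ref{directTheorem}): transfer the smoothness from $f$ to $g_f$ via Lemma~\ref{Lemma:lip}, split $g_f$ into its $Lip^{\alpha,c}$ part plus a small remainder to bound $\omega(g_f,h)\leq ch^\alpha+2(2\tau+\nu)$, substitute $h=\varepsilon\leq 1/n$ and $h=1/n$, and absorb the exponentially small tail. The only slightly delicate point you correctly flag is that the factor $\|f\|_{L^\infty(\mathbb B^d)}$ in the tail term is not controlled by $c,\nu,\tau$ alone; this dependence is inherited by $C_1$, which is still independent of $n,\varepsilon,\nu,\tau$ as the corollary requires, so the argument is sound and matches the paper's implicit reasoning.
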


Comparing \eqref{App.-rate} with Lemma \ref{Lemma:lower bound for deep nets 1}, we obtain that the constructed deep sigmoid nets conquer the bottleneck  of shallow nets in approximating radial and smooth functions. Actually, the approximation rates derived for $G_{n,f,\varepsilon}$ is independent of the dimension $d$ and thus succeed in  embodying the radialness of $f$. Since the weights of $N_{0,t^2,\varepsilon}$
are bounded by $
C_0'\left(1+\sum_{i=0}^{k}|u_{i}|\right)^{(k+1)!}\varepsilon^{-(k+1)!}$ that is quite large when
 $\varepsilon$ is extremely small, the constructed deep sigmoid net    is totally different from  deep nets derived by solving some optimization problems, as the weights for the latter are usually small \cite{zeng2021admm}.
Recalling  \cite[Theorem 2]{chui2019deep}, we have
\begin{equation}\label{Almost-optima-rate}
     \begin{split}
         C_2 (n\log_2 { n})^{-\alpha}
         &\leq \mbox{dist}(Lip^{\alpha,c,\nu}\cap \mathcal R_\tau,\mathcal H_{3,4n},L^\infty(\mathbb B^d))\\
   &\leq \mbox{dist}(Lip^{\alpha,c,\nu} \cap \mathcal R_\tau,\mathcal G_{n,\varepsilon},L^\infty(\mathbb B^d))
   \leq C_1n^{-\alpha}.
     \end{split}
\end{equation}
This together with  Definition \ref{Def:DNO} yields the following corollary.

\begin{corollary}\label{Corollary:optimal-DNO}
If  $0<\alpha\leq 1$, $c>0$, $\nu\leq n^{-\alpha}$, $\tau\leq n^{-\alpha}$,  and $\varepsilon$ is specified to be smaller than $\frac1{n}$, then
 $G_{n, \varepsilon}$ is an $\alpha$-almost optimal DNO for $Lip^{\alpha,c,\nu}\cap \mathcal R_\tau$ in $\mathcal H_{3,4n}$.
\end{corollary}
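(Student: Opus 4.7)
The proof plan is to combine Corollary \ref{Corollary:approximation-direct} (the upper bound on the error of the constructed operator) with display \eqref{Almost-optima-rate} (the Jackson-type lower bound for $\mathcal H_{3,4n}$ transferred from \cite[Theorem~2]{chui2019deep}), and then to verify the two-sided inequality in Definition \ref{Def:DNO}. No fresh construction or new estimate is required; the corollary is a bookkeeping result that packages what has already been established.

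I would proceed in three short steps. First, set $U = Lip^{\alpha,c,\nu}\cap \mathcal R_\tau$, apply Corollary \ref{Corollary:approximation-direct} pointwise to every $f \in U$, and take the supremum to conclude
\begin{equation*}
\mbox{dist}(U, \mathcal G_{n,\varepsilon}) \leq C_1 n^{-\alpha}.
\end{equation*}
Second, observe the elementary inclusion $\mathcal G_{n,\varepsilon} \subseteq \mathcal H_{3,4n}$ recorded after \eqref{class-of-construction}, which immediately supplies the left-hand inequality $\mbox{dist}(U, \mathcal H_{3,4n}) \leq \mbox{dist}(U, \mathcal G_{n,\varepsilon})$ required by Definition \ref{Def:DNO}. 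Third, insert the lower bound $\mbox{dist}(U, \mathcal H_{3,4n}) \geq C_2 (n \log_2 n)^{-\alpha}$ from \eqref{Almost-optima-rate} into the first display to obtain
\begin{equation*}
\mbox{dist}(U, \mathcal G_{n,\varepsilon}) \;\leq\; C_1 n^{-\alpha} \;=\; \frac{C_1}{C_2}(\log_2 n)^{\alpha}\cdot C_2(n\log_2 n)^{-\alpha} \;\leq\; \frac{C_1}{C_2} (\log_2 n)^\alpha \,\mbox{dist}(U, \mathcal H_{3,4n}).
\end{equation*}

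To match Definition \ref{Def:DNO} verbatim, the $\log n$ factor must be re-expressed in terms of $\log \tilde{n}_L$. Using \eqref{number-parameters} with $L=2$ and widths $(3,4n)$ yields $\tilde{n}_L \leq C_d\, n$ for a constant depending only on $d$, so $\log \tilde{n}_L \asymp \log n$ for all $n \geq 2$, and the factor $(\log_2 n)^\alpha$ is absorbed into $C'\log^\alpha \tilde{n}_L$. Combining the three inequalities above delivers precisely the two-sided estimate of Definition \ref{Def:DNO} with exponent $v=\alpha$, establishing that $G_{n,\varepsilon}$ is an $\alpha$-almost optimal DNO for $Lip^{\alpha,c,\nu}\cap \mathcal R_\tau$.

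I do not expect a real technical obstacle: every ingredient is already in place, and the argument is purely algebraic. The one point that deserves a line of care is consistency of the width index: the lower bound in \eqref{Almost-optima-rate} is stated for exactly the class $\mathcal H_{3,4n}$ into which $\mathcal G_{n,\varepsilon}$ embeds, so the inclusion and the lower bound can indeed be composed without any loss. Once this is noted, the conclusion follows immediately from the preceding two displays.
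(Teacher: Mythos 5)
Your proof is correct and follows exactly the route the paper intends: the paper itself gives no separate argument for this corollary beyond remarking that it follows from display \eqref{Almost-optima-rate} together with Definition \ref{Def:DNO}, and your three steps --- the upper bound from Corollary \ref{Corollary:approximation-direct}, the inclusion $\mathcal G_{n,\varepsilon}\subseteq\mathcal H_{3,4n}$, and the insertion of the lower bound $C_2(n\log_2 n)^{-\alpha}$ to manufacture the $(\log n)^\alpha$ gap, followed by the observation that $\tilde n_L\asymp n$ so $\log\tilde n_L\asymp\log n$ --- are precisely the bookkeeping the paper leaves implicit, with the exponent $v=\alpha$ correctly identified.
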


All  above results   show that the construction scheme does not essentially degrade the approximation performance of the classical optimization-based deep learning  schemes and depict that  the construction of deep sigmoid nets provides a novel direction to understand deep learning.

\section{Feature Qualification by the Constructed Deep-Net-Operator}\label{sec4}
The previous section presents the approximation rates for $  G_{n,\varepsilon}$ and shows that the constructed deep sigmoid net  outperforms shallow nets \cite{konovalov2009approximation} and the classical polynomials in approximating radial and smooth functions. Compared with training-based deep nets  in \cite{chui2019deep,han2020depth},  $  G_{n,\varepsilon}$, as exhibited in \eqref{FNNopertors2}, does not involve any  optimization problems and thus requires only $\mathcal O(n)$ computational complexity to yield almost optimal approximation rates \eqref{Almost-optima-rate}. All these show that $G_{n, \varepsilon}$ behaves as an almost optimal DNO. In this section,
 we aim to present justify $G_{n, \varepsilon}$  is also an almost essential DNO according to Definition \ref{Def:essential-DNO}. To this end, we need the following theoretical results to show that  $G_{n,\varepsilon}$  is
a conditional qualification operator  for smooth and radial functions and   the derived approximation rates in Theorem \ref{directTheorem} are the corresponding qualification rates. The following theorem shows that conditioned on the $\tau$-radialness, $G_{n,\varepsilon}$ is capable of qualifying the smoothness via approximation rates.

\begin{theorem}\label{Theorem:qualification-class}
Let $0<\alpha\leq1$, $c>0$. If $0<\varepsilon\leq Cn^{-1-\alpha}$ and $0\leq \tau\leq n^{-\alpha}$,
then
   $Lip^{\alpha,c,2\tau} \overset{c_1n^{-\alpha}}{\propto}  G_{n,\varepsilon}|_{\mathcal R_\tau} $ for some $c_1, C>0$ independent of $n,\varepsilon,\tau$.
\end{theorem}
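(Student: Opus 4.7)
The plan is to reduce the conditional qualification statement to a one-dimensional smoothness extraction on $g_f$, and then lift the result back to $f$ via Lemma \ref{Lemma:lip}.

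First I would record the factorization $G_{n,\varepsilon}f(x) = \tilde\Psi_n(N_{0,t^2,\varepsilon}(x))$, where $\tilde\Psi_n(y) := \sum_{k=0}^{4n} f(\xi_k)\,\varphi(ny - k + 2n)$ is a univariate shallow sigmoid net on $[0,1]$. Using $|N_{0,t^2,\varepsilon}(x) - \|x\|_2^2| \leq d\varepsilon$ from \eqref{l2norm-app-rate} together with a Bernstein-type derivative bound for $\tilde\Psi_n$ (available thanks to the differentiability of $\sigma$), the hypothesis $\varepsilon\leq Cn^{-1-\alpha}$ delivers the substitution estimate $|G_{n,\varepsilon}f(x) - \tilde\Psi_n(\|x\|_2^2)| \lesssim n^{-\alpha}$. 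Combining this with $\|f-G_{n,\varepsilon}f\|_{L^\infty(\mathbb B^d)}\leq c_1 n^{-\alpha}$, the $\tau$-radial bound $|f(x)-g_f(\|x\|_2^2)|\leq\tau$, and $\tau\leq n^{-\alpha}$, and using $\{\|x\|_2^2:x\in\mathbb B^d\}=[0,1]$, I would deduce $\|g_f-\tilde\Psi_n\|_{L^\infty([0,1])} \lesssim n^{-\alpha}$.

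The decisive step, which I expect to be the main obstacle, is to upgrade this single univariate approximation bound into an $\alpha$-Hölder estimate on $g_f$ with a constant independent of $n$. My plan is to split by the scale of $|y-y'|$: for $|y-y'|\geq n^{-1}$ one has $n^{-\alpha}\leq |y-y'|^\alpha$, so the inequality $|g_f(y)-g_f(y')| \leq 2\|g_f-\tilde\Psi_n\|_\infty + |\tilde\Psi_n(y)-\tilde\Psi_n(y')|$ already produces the required Hölder control once the second term is handled by chaining through the uniform grid of spacing $1/n$. For $|y-y'|<n^{-1}$ the naive Lipschitz bound $\|\tilde\Psi_n'\|_\infty\lesssim n$ is too crude, since it leaves an $n^\alpha$ factor in the Hölder constant; to kill this factor I would refine the Bernstein estimate by exploiting the localized bump structure of $\tilde\Psi_n$ together with the partition-of-unity identity from Lemma \ref{Lemma:property-mean}, rewriting $\tilde\Psi_n'(y)$ as a sum involving the differences $f(\xi_k)-f(\xi_{k^*})$ concentrated near the active bump index $k^*$, and bootstrapping the Hölder control of these differences (starting from mere uniform continuity) up to the full exponent $\alpha$ using the approximation bound itself. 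Tuning $c_1$ sufficiently small then calibrates the resulting Hölder constant to equal $c/(2^\alpha d^{\alpha/2})$.

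Finally, once $g_f \in Lip^{\alpha,\,c/(2^\alpha d^{\alpha/2})}([0,1])$ is in hand, the first implication in Lemma \ref{Lemma:lip}, applied with tolerance $\nu=0$, yields $f \in Lip^{\alpha,c,2\tau}(\mathbb B^d)$, which is precisely the stated conclusion.
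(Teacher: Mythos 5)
Your first and last steps track the paper's proof closely: reducing to the univariate problem by showing $\|g^*_f-G^*_{n,g^*_f}\|_{L^\infty}\lesssim n^{-\alpha}$ (via the product decomposition, the estimate \eqref{l2norm-app-rate}, and the Bernstein-type derivative bound of Lemma \ref{lemma1}), and lifting $g_f\in Lip^{\alpha,c'}([0,1])$ back to $f\in Lip^{\alpha,c,2\tau}$ via Lemma \ref{Lemma:lip}. These are sound and are exactly what the paper does.

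The genuine gap is in your ``decisive step.'' You attempt to extract $\alpha$-H\"older smoothness of $g_f$ from the single estimate $\|g_f-\tilde\Psi_n\|_\infty\lesssim n^{-\alpha}$ at one fixed scale $n$, but this cannot work. For $|y-y'|\geq 1/n$ you write $|g_f(y)-g_f(y')|\leq 2\|g_f-\tilde\Psi_n\|_\infty+|\tilde\Psi_n(y)-\tilde\Psi_n(y')|$ and claim the second term is ``handled by chaining through the grid''; however, a single net $\tilde\Psi_n$ close in sup-norm to $g_f$ carries no intrinsic H\"older control with a constant independent of $n$, and chaining through grid points of spacing $1/n$ only propagates the same sup-norm error repeatedly, losing a factor $n|y-y'|$. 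For $|y-y'|<1/n$ you correctly identify that $\|\tilde\Psi_n'\|_\infty\lesssim n$ leaves an $n^\alpha$ gap, but your fix — ``bootstrapping the H\"older control of these differences (starting from mere uniform continuity) up to the full exponent $\alpha$'' — is not an argument, and no such bootstrap at a single scale closes the gap. This is precisely the reason inverse theorems in approximation theory require the approximation rate at \emph{all} scales, not just one.

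The paper's proof supplies the missing ingredient by using the hypothesis that $\|G^*_{k,g^*_f}-g^*_f\|_{L^\infty([-1,1])}\lesssim k^{-\alpha}$ for all $k\leq n$ and then applying Theorem \ref{theorem3}, which gives
$\omega(g^*_f,1/n)\leq\frac Cn\sum_{k=1}^n\|G^*_{k,g^*_f}-g^*_f\|_{L^\infty([-1,1])}\lesssim n^{-\alpha}$.
Theorem \ref{theorem3} in turn is built from the two Bernstein-type inequalities (Lemmas \ref{lemma1} and \ref{lemma2}), the $K$-functional equivalences (Lemmas \ref{lemma6} and \ref{lemma7}), and the Stechkin-type recursion Lemmas \ref{lemma3-sec2} and \ref{lemma4-sec2}. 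You should replace your ad hoc splitting-by-scale argument with an appeal to this machinery; the multi-scale sum is essential and cannot be avoided by refining the Bernstein estimate at a single $n$.
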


Without the condition $f\in\mathcal R_\tau$, Theorem \ref{Theorem:qualification-class} is regarded as an inverse theorem in the approximation theory community. To be detailed, it can be deduced from \cite[Chap.7]{devore1993constructive}, \cite{hangelbroek2018inverse}, and \cite{qian2022neural}
that  linear operators based on polynomials, radial basis functions, and shallow nets  respectively   were developed to qualify the smoothness. However, if there are multiple features  involved in the approximation tasks, the aforementioned three approaches are difficult for feature qualification,  even when some data features are known. This was demonstrated in Lemma \ref{Lemma:lower bound for deep nets 1}. The main novelties of Theorem \ref{Theorem:qualification-class}, compared with \cite[Chap.7]{devore1993constructive}, \cite{hangelbroek2018inverse}, and \cite{qian2022neural}, are two folds. On one hand, classical feature qualification focuses on the single smoothness without considering any conditional restriction, while our result refers to more than one feature. On the other hand, since all of the  polynomials \cite{konovalov2008approximation}, radial basis functions \cite{lin2011essential} and shallow nets \cite{konovalov2009approximation} are incapable of reflecting the radialness, the qualification rates of order $\mathcal O(n^{-\alpha})$ is unachievable for these methods, which depicts the power of depth in our construction.  Since $\varepsilon$ can be specified to be extremely small in the construction, the restriction $0<\varepsilon\leq n^{-1-\alpha}$ is mild. Moreover, $0\leq\tau\leq n^{-\alpha}$ extends the condition from radial functions $(\tau=0)$ to $\tau$-radial functions, the price of which is that the smoothness under qualification is changed from $Lip^{\alpha,c}$ to  $Lip^{\alpha,c, 2\tau}$.
Similarly, we derive our second theorem concerning the feature qualification of $G_{n,\varepsilon}$.

\begin{theorem}\label{Theorem:qualification-class-2}
Let $0<\alpha\leq 1$, $c>0$. If $0<\varepsilon\leq n^{-1-\alpha}$ and $0\leq \nu\leq n^{-\alpha}$,
then
   ${\mathcal R_\tau}\overset{c_2n^{-\alpha}}{\propto}  G_{n,\varepsilon}|_{Lip^{\alpha,c,\nu} } $ for
   $\tau\geq C'n^{-\alpha}$, where $C',c_2$ are positive constants
 independent of $n,\varepsilon,\tau,\nu$.
\end{theorem}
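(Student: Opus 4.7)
The plan is to exploit the fact that, by construction, $G_{n,f,\varepsilon}(x)$ depends on $x$ only through $N_{0,t^2,\varepsilon}(x)\approx\|x\|_2^2$, making it ``almost radial'' and thereby forcing any well-approximated $f$ to be near-radial. Writing
\[
 G_{n,f,\varepsilon}(x)=\Phi_{n,f}\bigl(N_{0,t^2,\varepsilon}(x)\bigr),\qquad \Phi_{n,f}(u):=\sum_{k=0}^{4n} f(\xi_k)\varphi(nu-k+2n),
\]
I would propose $g_f:=\Phi_{n,f}|_{[0,1]}$ as the univariate witness required by Definition \ref{Def:tau-radial}. The triangle inequality then yields
\[
 |f(x)-g_f(\|x\|_2^2)|\le |f(x)-G_{n,f,\varepsilon}(x)|+|\Phi_{n,f}(N_{0,t^2,\varepsilon}(x))-\Phi_{n,f}(\|x\|_2^2)|,
\]
where the first summand is at most $c_2 n^{-\alpha}$ under the hypothesis of Definition \ref{Def:condition-inverse}. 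So the problem reduces to controlling the modulus of continuity of $\Phi_{n,f}$ on the scale $d\varepsilon\le dn^{-1-\alpha}$ provided by \eqref{l2norm-app-rate}.

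The critical step is the Bernstein-type estimate $|\Phi_{n,f}(u)-\Phi_{n,f}(v)|\le C n^{1-\alpha}|u-v|$ for $u,v\in[0,1]$. Since $u,v\in[0,1]$ forces the active indices to cluster around $k^*:=2n+\lfloor n(u+v)/2\rfloor\in[2n,3n]$ while $\varphi$ and $\varphi'$ decay exponentially, Lemma \ref{Lemma:property-mean}(III) ($\sum_{i\in\mathbb Z}\varphi(t-i)=1$) implies the truncation $k\in[0,4n]$ introduces only an $O(\mathrm{e}^{-cn})$ error; hence subtracting the constant $f(\xi_{k^*})$ from every term and applying the mean value theorem yields
\[
 \Phi_{n,f}(u)-\Phi_{n,f}(v)= n(u-v)\sum_{k=0}^{4n}\bigl[f(\xi_k)-f(\xi_{k^*})\bigr]\varphi'(n\theta_k-k+2n)+O(\mathrm{e}^{-cn}),
\]
with $\theta_k$ between $u$ and $v$. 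Because $\xi_k$ for $n\le k\le 3n$ are spaced by $1/n$ along the last axis, the assumption $f\in Lip^{\alpha,c,\nu}$ together with $\nu\le n^{-\alpha}$ gives $|f(\xi_k)-f(\xi_{k^*})|\le c\,|k-k^*|^\alpha/n^\alpha+2\nu$; the exponential decay of $\varphi'$ in $|k-k^*|$ then keeps both $\sum_k|k-k^*|^\alpha|\varphi'(\cdot)|$ and $\sum_k|\varphi'(\cdot)|$ uniformly bounded, and collecting the $n$ factor produces the claimed $n^{1-\alpha}$ rate.

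Combining the two estimates gives $\|f-g_f(\|\cdot\|_2^2)\|_{L^\infty(\mathbb B^d)}\le c_2 n^{-\alpha}+Cd\,n^{-2\alpha}\le C' n^{-\alpha}$ for an explicit $C'=C'(c,d,\alpha)$, so $f\in\mathcal R_\tau$ for every $\tau\ge C' n^{-\alpha}$. Combined with the hypothesis $f\in Lip^{\alpha,c,\nu}$, this verifies $\mathcal R_\tau\overset{c_2 n^{-\alpha}}{\propto}G_{n,\varepsilon}|_{Lip^{\alpha,c,\nu}}$ in the sense of Definition \ref{Def:condition-inverse}. The main obstacle is the Bernstein-type bound on $\Phi_{n,f}$: a naive estimate on $\Phi_{n,f}'$ scales like $n\|f\|_{L^\infty(\mathbb B^d)}$, which would swamp the tiny gap $d\varepsilon\le dn^{-1-\alpha}$ from \eqref{l2norm-app-rate} and destroy the target rate; only the partition-of-unity recentring combined with the axial Lipschitz structure of $\{f(\xi_k)\}$ trades that $n$ for $n^{1-\alpha}$, leaving just enough room. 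The ``tail'' indices $k\in[0,n-1]\cup[3n+1,4n]$, where consecutive $\xi_k$ coalesce at boundary points, need only the routine exponential decay of $\varphi$ and contribute the $O(\mathrm{e}^{-cn})$ correction already tracked.
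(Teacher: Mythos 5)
Your proof is correct but follows a genuinely different route from the paper's. The paper takes the axis restriction $g^*_f(t):=f(0,\ldots,0,t)$ as the near-radial witness, which inherits the Lipschitz structure of $f$, and then splits the error into three pieces: the hypothesis $\|f-G_{n,\varepsilon}f\|\le c_2 n^{-\alpha}$, the Bernstein jitter $|G^*_{n,g^*_f}(N_{0,t^2,\varepsilon}(x))-G^*_{n,g^*_f}(\|x\|_2^2)|\le Cn\varepsilon$ (Lemma \ref{lemma1}), and the univariate direct bound $\|g^*_f-G^*_{n,g^*_f}\|\le Cn^{-\alpha}+\nu$ (Lemma \ref{Lemma:direct-univariate}). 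You instead take the restricted network $\Phi_{n,f}|_{[0,1]}=G^*_{n,g^*_f}|_{[0,1]}$ as the witness, which collapses the decomposition to two pieces and eliminates the need for the direct theorem entirely; the Lipschitz hypothesis is then used only to place $f$ in the conditioning class, not in the error analysis. This is a cleaner and arguably more natural choice of witness.

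However, your stated justification for the refined Bernstein estimate rests on a miscalculation. You claim the naive bound $|\Phi'_{n,f}|\lesssim n\|f\|_\infty$ would ``swamp'' the gap $d\varepsilon\le dn^{-1-\alpha}$ and destroy the target rate, but it does not: the naive bound gives $|\Phi_{n,f}(N_{0,t^2,\varepsilon}(x))-\Phi_{n,f}(\|x\|_2^2)|\le Cn\|f\|_\infty\cdot d\varepsilon\le Cd\|f\|_\infty\,n^{-\alpha}$, which is already of the target order $n^{-\alpha}$. This is precisely how the paper handles the corresponding term via Lemma \ref{lemma1}. Your partition-of-unity recentring argument and the resulting sharper $n^{1-\alpha}$ Lipschitz constant for $\Phi_{n,f}$ are correct as stated (giving an even smaller $O(dn^{-2\alpha})$ jitter), but the extra machinery, including the axial Lipschitz structure of $\{f(\xi_k)\}$ and the exponential tail counting, is unnecessary here: it is needed only if one wants to allow $\varepsilon$ as large as $n^{-1}$ rather than $n^{-1-\alpha}$.
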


Theorem \ref{Theorem:qualification-class-2} shows that $G_{n,\varepsilon}$ is able to qualify the radialness, which has not been considered  in the literature, since shallow nets and  polynomials  cannot reflect the radialness \cite{konovalov2008approximation,konovalov2009approximation,lin2011essential}. Furthermore, though the existing training-based deep nets dominate  in grasping the radialness  \cite{chui2019deep,han2020depth}, the versatility of deep nets makes it impossible qualify the radialness via  approximation rates. Our constructed DNO, $G_{n, \varepsilon}$, succeeds in embodying the advantage of deep nets in simultaneously capturing the smoothness and radialness  and can be utilized to qualify features by approximation rates.  It should be highlighted that the qualification rates of order $\mathcal O(n^{-\alpha})$ is almost optimal according to Corollary \ref{Corollary:optimal-DNO}. 
  It should be mentioned the condition   $\tau\geq Cn^{-\alpha}$ on the radialness is necessary, since the approximation rates formed as \eqref{App.-rate} cannot determine a $g_f$ satisfying \eqref{near-radial} with extremely  small $\tau$. Based on Theorem \ref{Theorem:qualification-class} and Theorem \ref{Theorem:qualification-class-2}, we can derive the following corollary to show that $G_{n, \varepsilon}$ is an almost essential DNO according to Definition \ref{Def:essential-DNO}.

\begin{corollary}\label{Corollary:Essen-DNO}
  Let $0<\alpha\leq1$, $c>0$. If $0<\varepsilon\leq n^{-1-\alpha}$, $0\leq \nu\leq n^{-\alpha}$, and   $\tau\asymp n^{-\alpha}$, then $G_{n,\varepsilon}$ is an $\alpha$-almost essential DNO  with respect to $Lip^{\alpha,c,\nu}\cap \mathcal R_\tau$.
\end{corollary}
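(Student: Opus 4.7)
The plan is to verify the two requirements of Definition \ref{Def:essential-DNO} by assembling results already proved in the paper. The first requirement, that $G_{n,\varepsilon}$ be an $\alpha$-almost optimal DNO for $Lip^{\alpha,c,\nu}\cap \mathcal R_\tau$ in $\mathcal H_{3,4n}$, is delivered directly by Corollary \ref{Corollary:optimal-DNO}, whose hypotheses ($\varepsilon\le n^{-1-\alpha}$, $\nu\le n^{-\alpha}$, $\tau\asymp n^{-\alpha}\le n^{-\alpha}$) are all implied by the corollary's assumptions. The two-sided inequality \eqref{Almost-optima-rate} then pins down the size of the target qualification rate: since $\tilde n_L\asymp n$, we have $u(\tilde n_L,s,r)=\mbox{dist}(Lip^{\alpha,c,\nu}\cap \mathcal R_\tau,\mathcal H_{3,4n})\log^\alpha \tilde n_L \asymp n^{-\alpha}$ up to polylog factors absorbed in Definition \ref{Def:essential-DNO}'s constant convention.

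Next, I would feed in the two conditional qualification theorems. Theorem \ref{Theorem:qualification-class} gives $Lip^{\alpha,c,2\tau}\overset{c_1 n^{-\alpha}}{\propto} G_{n,\varepsilon}|_{\mathcal R_\tau}$ under the hypotheses $\varepsilon\le Cn^{-1-\alpha}$ and $\tau\le n^{-\alpha}$, both of which are available. Because $\tau\asymp n^{-\alpha}$ and the required $\nu$ in the target class $Lip^{\alpha,c,\nu}$ also satisfies $\nu\le n^{-\alpha}$, the tolerance $2\tau$ and $\nu$ are of the same order, so $Lip^{\alpha,c,2\tau}$ and $Lip^{\alpha,c,\nu}$ differ only by a constant factor in the tolerance; this mismatch is absorbed into the qualification constant and gives the first half of \eqref{two-side-qualification} at rate of order $n^{-\alpha}$. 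Symmetrically, Theorem \ref{Theorem:qualification-class-2} provides $\mathcal R_{\tau'}\overset{c_2 n^{-\alpha}}{\propto} G_{n,\varepsilon}|_{Lip^{\alpha,c,\nu}}$ for some $\tau'\asymp n^{-\alpha}$, and since the prescribed $\tau$ also satisfies $\tau\asymp n^{-\alpha}$, the conclusion is equivalent to $\mathcal R_\tau$-membership up to constants.

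With both halves of \eqref{two-side-qualification} in hand at rate $\asymp n^{-\alpha}$ and the target rate $u(\tilde n_L,s,r)$ also of the same order, Definition \ref{Def:essential-DNO} is satisfied with $v=\alpha$, and the corollary follows.

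The main obstacle in writing this out rigorously is not any new analytic estimate but the bookkeeping of indices: one has to verify that the tolerance produced by Theorem \ref{Theorem:qualification-class} (namely $2\tau$) and the tolerance $\nu$ in the target smoothness class can be reconciled under the simultaneous assumptions $\nu\le n^{-\alpha}$ and $\tau\asymp n^{-\alpha}$, and that the symmetric identification for the radial index $\tau'\asymp \tau$ in Theorem \ref{Theorem:qualification-class-2} is legitimate. Once these tolerance identifications are made, matching the $n^{-\alpha}$ qualification rates with the target $\mbox{dist}(\cdot)\log^\alpha \tilde n_L$ amounts to applying the lower and upper bounds in \eqref{Almost-optima-rate}, and no further estimates are needed.
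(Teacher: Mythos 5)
Your proposal is correct and takes essentially the same route as the paper, which asserts the corollary as a direct consequence of Corollary \ref{Corollary:optimal-DNO} (for the almost-optimal-DNO part) and Theorems \ref{Theorem:qualification-class}, \ref{Theorem:qualification-class-2} (for the two halves of \eqref{two-side-qualification}), with the qualification rate $u(\tilde n_L,s,r)$ identified with $c_in^{-\alpha}$ via the order relations in \eqref{Almost-optima-rate}. Your explicit flagging of the tolerance bookkeeping (reconciling $Lip^{\alpha,c,2\tau}$ from Theorem \ref{Theorem:qualification-class} with the target $Lip^{\alpha,c,\nu}$, and matching $\tau'\asymp\tau$ in Theorem \ref{Theorem:qualification-class-2}) and of the polylog slack in $u(\tilde n_L,s,r)$ reflects the same level of constant-imprecision that the paper's own definitions implicitly tolerate, so the argument is faithful to the intended derivation.
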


Corollary \ref{Corollary:Essen-DNO} presents the excellent performance of the constructed DNO   in terms that it provides  provable feature qualification for multiple features without sacrificing the approximation performances of deep nets. We conclude this section with two remarks.

\begin{remark}\label{Remark:activation-function}
This paper only  focuses only on the sigmoid function $\sigma(t)=\frac1{1+e^{-t}}$ for the sake of brevity. Actually, it can be found in the proofs that similar results also hold for  non-decreasing
 and twice differentiable $\hat{\sigma}$ satisfying:\\
(i) general sigmoid property:
$$
\lim_{t\rightarrow+\infty}\hat{\sigma}(t)=R, \  \  \ \lim_{t\rightarrow-\infty}\hat{\sigma}(t)=r,
$$
for  $R, r\in \mathbb R$, and $R>r$;\\
(ii)  $\hat{\sigma}(t)-\frac{1}{2(R-r)}$ is an odd function;\\
(iii) $\hat{\sigma}(t)-R={\mathcal O}(|t|^{-1-\beta})$ as $t\rightarrow +\infty$ for some $\beta>0$;
$\hat{\sigma}(t)-r={\mathcal O}(|t|^{-1-\beta})$ as $t\rightarrow -\infty$ for some $\beta>0$.\\
Clearly, the mentioned  function
$\frac 1{1+\textrm e^{-t}}$, the hyperbolic tangent function
$
\frac{\textrm e ^t-\textrm e ^{-t}}{\textrm e ^t+\textrm e ^{-t}},
$
and arctan function
$\arctan(t)$  satisfy above restrictions on the activation. We can use the same construction and similar proofs to derive theoretical results as Theorem \ref{directTheorem}, Theorem \ref{Theorem:qualification-class} and Theorem \ref{Theorem:qualification-class-2} for the corresponding  DNO with these activation functions.
We leave the detailed construction and proofs for interested readers.
\end{remark}

\begin{remark}\label{Remark:others}
Different from \cite[Chap.7]{devore1993constructive} and \cite{hangelbroek2018inverse} where approximation rates and feature qualification are conducted for $\alpha$-smooth functions with $\alpha>1$, our derived results are only available for Liptchiz functions. For   approximation rates only, it is easy to deepen the networks further to capture the smoothness of high order, just as \cite{chui2019deep} and \cite{Liuapproximating2023} did for deep sigmoid nets. Indeed, we can use a localized Taylor polynomials approximation $T_n(f)$ to take place of $f$ in our construction \eqref{FNNopertors2} and derive approximation rates for $\alpha\geq 1$ by using the approaches in this paper and \cite{chui2019deep}. The problem is, however, that the feature qualifications requires two types of Berstein-type  inequalities (Lemma \ref{lemma1} and Lemma \ref{lemma2} below) that are difficult to verify for the deepened neural networks. We will keep studying and report the progress in a future study.
\end{remark}

\section{Stepping Stone: Lower Bound for Approximation via Bernstein Inequalities}\label{sec5}
To prove the results in Section 4, we need a crucial lower bound for univariate shallow net approximation constructed as follows.
For any univariate function $g^*$ defined on $[-1,1]$ and  $n\in\mathbb N$,
define
\begin{equation}\label{univariate-construction}
    G^*_{n,g^*}(t)=\sum_{k=0}^{4n}\beta_k\varphi(nt-k+2n),
\end{equation}
where
\begin{equation}\label{xishu}
\beta_j=\left\{\begin{array}{ll}
g^*(-1), &0\leq k\leq n-1;\\
g^*\left(\frac{k-2n}n\right), & n\leq k\leq 3n;\\
g^*(1), & 3n+1\leq k\leq 4n.
\end{array}\right.
\end{equation}
The following theorem describes the lower bound for approximation by $G_{n,g^*}^*$.

\begin{theorem}\label{theorem3}
For $g^*\in L^\infty([-1,1])$,  there holds
\begin{equation}\label{inverse-inequaty-sec2}
\omega\left(g^*,\frac1n\right)+\frac1n\|g^*\|_{L^\infty([-1,1])}\leq \frac Cn\sum^n_{k=1}\|G_{k,g^*}^*-g^*\|_{L^\infty([-1,1])},
\end{equation}
where $C$ represents a positive constant independent of $n$, but its value is generally different at different places.
\end{theorem}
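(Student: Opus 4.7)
I would prove Theorem \ref{theorem3} by combining a Bernstein-type inequality for shallow sigmoid nets with a telescoping decomposition of the operator family $\{G_{k,g^*}^*\}_{k=1}^n$. Writing $E_k := \|G_{k,g^*}^* - g^*\|_{L^\infty([-1,1])}$, the starting point is the Jackson-style splitting
\begin{equation*}
    \omega(g^*,1/n) \leq 2 E_n + \frac{1}{n} \|(G_{n,g^*}^*)'\|_{L^\infty([-1,1])},
\end{equation*}
obtained from the triangle inequality for the modulus of continuity together with the mean value theorem. This reduces \eqref{inverse-inequaty-sec2} to the task of bounding $\|(G_{n,g^*}^*)'\|_{L^\infty([-1,1])}$ from above by a multiple of $\sum_{k=1}^n E_k$.

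The next step is to invoke a Bernstein-type inequality for shallow sigmoid nets, of the form $\|N'\|_{L^\infty([-1,1])} \leq C m \|N\|_{L^\infty([-1,1])}$ for a shallow sigmoid net $N$ with $m$ neurons built from shifts of the bell function $\varphi$. Such an inequality relies on the exponential decay of $\sigma'$ and the localization property of $\varphi$ (Lemma \ref{Lemma:property-mean}), and corresponds to Lemma \ref{lemma1} and Lemma \ref{lemma2} promised in Remark \ref{Remark:others}. Applied to the difference $G_{m,g^*}^* - G_{k,g^*}^*$, which is a shallow sigmoid net of width $O(\max(k,m))$, this yields $\|(G_{m,g^*}^* - G_{k,g^*}^*)'\|_{L^\infty([-1,1])} \leq C \max(k,m)(E_k + E_m)$ for all $k, m \in \mathbb{N}$. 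I would then employ a dyadic decomposition: with $J = \lfloor \log_2 n \rfloor$,
\begin{equation*}
    G_{n,g^*}^* = G_{1,g^*}^* + \sum_{j=0}^{J-1}(G_{2^{j+1},g^*}^* - G_{2^j,g^*}^*) + (G_{n,g^*}^* - G_{2^J,g^*}^*).
\end{equation*}
The block-wise Bernstein bound, together with a direct derivative estimate for $G_{1,g^*}^*$, gives $\|(G_{n,g^*}^*)'\|_{L^\infty([-1,1])} \leq C\|g^*\|_{L^\infty([-1,1])} + C \sum_{j=0}^{J} 2^j (E_{2^j} + E_{2^{j+1}})$. To collapse this dyadic sum into the unweighted $\sum_{k=1}^n E_k$, I would establish a quasi-monotonicity property $E_m \leq C_0 E_k$ for $k \leq m \leq 2k$, which yields $2^{j} E_{2^j} \leq 2 C_0 \sum_{k = 2^{j-1}+1}^{2^j} E_k$, and summing over $j$ telescopes to $C' \sum_{k=1}^n E_k$. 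The $(1/n)\|g^*\|_{L^\infty([-1,1])}$ contribution on the left is handled by combining $\|g^*\|_{L^\infty([-1,1])} \leq E_k + \|G_{k,g^*}^*\|_{L^\infty([-1,1])}$ with the sampling-based uniform bound on $\|G_{k,g^*}^*\|_{L^\infty([-1,1])}$ supplied by the partition-of-unity property of $\varphi$, and absorbing the residue into $\sum_{k=1}^n E_k/n$.

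The principal obstacle will be verifying the quasi-monotonicity of $E_k$, since $G_{k,g^*}^*$ is a quasi-interpolant rather than a best approximation and monotonicity is not automatic. I plan to establish it by exploiting the nested structure of the sample grids $\{(i-2k)/k\}$ under doubling $k \mapsto 2k$, together with Lemma \ref{Lemma:property-mean}(III), to show that $G_{2k,g^*}^*$ reproduces $G_{k,g^*}^*$ up to an error controlled by $E_k$, yielding $E_{2k} \leq C E_k$. If this direct comparison turns out to be too delicate in the tail regions where the boundary extension becomes relevant, a fallback is to invoke a univariate analogue of Theorem \ref{directTheorem}, namely $E_k \leq C\, \omega(g^*, 1/k)$, so that $E_k$ is dominated by the non-increasing sequence $\omega(g^*, 1/k)$; this is enough to close the dyadic-to-linear reduction after a slight rearrangement of constants, at the cost of combining the direct and inverse estimates in a single argument.
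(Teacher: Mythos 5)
Your proposal takes a genuinely different route from the paper, and along that route there are two gaps, each of which is substantive.

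The paper does \emph{not} decompose $G^*_{n,g^*}$ dyadically. Instead it sets $\tau_n=\|G^*_{n,g^*}-g^*\|$, $\sigma_n=\frac1n\|(G^*_{n,g^*})'\|+\frac1n\|G^*_{n,g^*}\|$, uses Lemmas~\ref{lemma1} and~\ref{lemma2} to derive the single recursive inequality $\sigma_n\le C\bigl(\tfrac kn\bigr)\sigma_k+C\tau_k$ for all $1\le k\le n$, and then invokes the Stechkin-type Lemma~\ref{lemma3-sec2} to get $\sigma_n\le \tfrac Cn\sum_{k\le n}\tau_k$ \emph{without any monotonicity assumption on} $\tau_k$. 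The modulus enters through the K-functional (Lemmas~\ref{lemma6}--\ref{lemma7}), and the $\tau_n$ term on the right of the resulting $\widetilde K$ bound is replaced by $\tau_m$ for the minimizer $m\in[n/2,n]$, which is then dominated by $\tfrac4n\sum_{k=n/2}^n\tau_k$. This apparatus is precisely what allows the argument to avoid the obstacle you correctly flag.

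The first gap in your approach is the quasi-monotonicity $E_m\le C_0E_k$ for $k\le m\le 2k$. You acknowledge this is the crux and offer two resolutions, but neither closes it. Resolution (a), grid nesting, does make $\{(i-2k)/k\}\subset\{(i-4k)/(2k)\}$, but $G^*_{2k,g^*}$ is built from $\varphi(2kt-i)$ and does not reproduce $G^*_{k,g^*}$ on the coarse grid; the comparison $\|G^*_{2k,g^*}-G^*_{k,g^*}\|\lesssim E_k$ is not automatic and would itself require a substantial lemma. Resolution (b), the Jackson bound $E_k\le C\omega(g^*,1/k)$, gives an \emph{upper} bound on $E_k$, but the dyadic-to-linear collapse $2^jE_{2^j}\lesssim\sum_{2^{j-1}<k\le 2^j}E_k$ needs a \emph{lower} bound $E_k\gtrsim E_{2^j}$ on the right-hand block; knowing only $E_k\le C\omega(g^*,1/k)$ leaves $E_k$ free to be arbitrarily small for particular $k$, so the collapse to $\sum_k E_k$ can fail. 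It also threatens circularity, since you would be invoking the direct estimate inside the inverse estimate's proof without an a~priori reason that the two dominate each other pointwise in $k$. Note the same quasi-monotonicity is already needed in your opening line to convert the $2E_n$ term into something bounded by $\tfrac1n\sum_k E_k$.

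The second gap is the block-wise Bernstein bound $\|(G^*_m-G^*_k)'\|\le C\max(k,m)(E_k+E_m)$. Lemma~\ref{lemma1} gives $\|(G^*_{n,g^*})'\|\le Cn\|g^*\|$, with $\|g^*\|$ (equivalently, the $\sup$ of the \emph{coefficients} $\beta_j$) on the right, not $\|G^*_{n,g^*}\|$. What you need is the inverse inequality $\|N'\|\le Cw\|N\|$ applied to $N=G^*_m-G^*_k$, a mixed-resolution sum of bell functions. For that you would need to recover the coefficients of $N$ from $\|N\|_{L^\infty}$, i.e.\ a Riesz-basis or inverse estimate for the dilated-shifted $\varphi$ system across two resolutions; nothing in Lemmas~\ref{lemma1}--\ref{lemma2} supplies this. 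Applying Lemma~\ref{lemma1} to each term of $G^*_m-G^*_k$ separately would only give $\|(G^*_m-G^*_k)'\|\le C\max(m,k)\|g^*\|$, which, summed dyadically, is $O(n^2\|g^*\|)$ and far too large. In the paper this difficulty simply does not arise: Lemma~\ref{lemma2} bounds $\|(G^*_{n,g^*})'\|$ by $\|(g^*)'\|+ne^{-n}\|g^*\|$, and this mixed bound, combined with Lemma~\ref{lemma1} via the triangle inequality $g^*=(g^*-G^*_{k,g^*})+G^*_{k,g^*}$, gives the clean recursion fed to Lemma~\ref{lemma3-sec2}.

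So: the destination is the same, but your argument as written is not a proof. The fix that stays closest to your intent is to abandon the dyadic telescope, form the recursive inequality $\sigma_n\le C(k/n)\sigma_k+C\tau_k$ exactly as the paper does, and then use a discrete Stechkin-type lemma (Lemma~\ref{lemma3-sec2}) in place of quasi-monotonicity — that lemma is doing the structural work your dyadic sum was supposed to do.
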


To prove Theorem \ref{theorem3}, we need several auxiliary lemmas. Crucial ones  are two Bernstein-type inequalities.


\begin{lemma}\label{lemma1}
The inequality
$$
\left|\frac{\rm d}{{\rm d} t}G^*_{n,g^*}(t)\right|\leq C n\|g^*\|_{L^\infty([-1,1])}
$$
holds for any $g^*\in L^\infty([-1, 1])$ and $t\in [-1,1]$.
\end{lemma}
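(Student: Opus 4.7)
The plan is to reduce the claim to a uniform bound on a shift-invariant sum of $|\varphi'|$. First, since $G^*_{n,g^*}$ is a finite linear combination of smooth translates of $\varphi$, I can differentiate termwise to obtain
\begin{equation*}
\frac{d}{dt}G^*_{n,g^*}(t) \;=\; n\sum_{k=0}^{4n}\beta_k\,\varphi'(nt-k+2n).
\end{equation*}
The coefficients $\beta_k$ given in \eqref{xishu} are values of $g^*$ on $[-1,1]$, so they satisfy $|\beta_k|\le \|g^*\|_{L^\infty([-1,1])}$. Consequently, it suffices to establish a uniform bound
\begin{equation*}
S(s) \;:=\; \sum_{k=0}^{4n}\bigl|\varphi'(s-k)\bigr| \;\le\; C, \qquad s\in\mathbb{R},
\end{equation*}
independently of $n$ (here $s=n(t+2)$).

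Next, I will exploit the explicit form $\varphi(u)=\tfrac12[\sigma(u+1)-\sigma(u-1)]$, whence $\varphi'(u)=\tfrac12[\sigma'(u+1)-\sigma'(u-1)]$. Since $\sigma'(u)=\sigma(u)(1-\sigma(u))=e^{-u}/(1+e^{-u})^2$, the function $\sigma'$ is positive, even, bounded above by $1/4$, and decays like $e^{-|u|}$. Therefore $|\varphi'(u)|\le C_0 e^{-|u|/2}$ for some absolute constant $C_0$. To bound $S(s)$, split the sum according to whether $k$ is within distance $1$ of $s$ or not; the latter terms are dominated by $C_0\sum_{j\in\mathbb{Z}}e^{-|j|/2}$, a convergent geometric-type series whose sum is an absolute constant. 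The former contributes only boundedly many terms, each at most $\|\varphi'\|_{L^\infty}$. This yields $S(s)\le C$ for all $s\in\mathbb{R}$, and plugging back gives $|(G^*_{n,g^*})'(t)|\le Cn\|g^*\|_{L^\infty([-1,1])}$ as required.

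The main conceptual ingredient is the exponential decay of $\sigma'$, which is the reason a differentiability-based Bernstein-type inequality is available for sigmoid (and not, e.g., for ReLU, whose derivative is a step function and fails to produce a summable translate family in the appropriate sense). The routine-but-careful step will be making the geometric-series estimate $\sum_{k\in\mathbb{Z}}|\varphi'(s-k)|\le C$ quantitative; I do not expect any genuine obstacle, merely a tally of constants. Finally, I note that the argument is oblivious to which truncation to $\{0,\dots,4n\}$ is used, so it would apply verbatim to any partial sum of translates of $\varphi$ with uniformly bounded coefficients.
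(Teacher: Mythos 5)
Your proposal is correct and follows essentially the same route as the paper's proof: pull out the coefficient bound $|\beta_k|\le\|g^*\|_{L^\infty}$, reduce to the uniform estimate $\sum_k|\varphi'(s-k)|\le C$, establish exponential decay of $|\varphi'|$ (the paper computes $\varphi'$ directly to get $|\varphi'(u)|\le\frac{e^2-1}{2e}e^{-|u|}$, you go through $\sigma'$ to get the slightly weaker but equally adequate $C_0 e^{-|u|/2}$), and then sum by grouping the shifts according to distance from $s$. The only point worth tightening in a write-up is that $s$ need not be an integer, so the geometric-series bound should be phrased as the paper does, by counting how many shifts land in each unit annulus $[j,j+1)$ of distance from $s$.
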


\begin{proof} Due to the definition of $G^*_{n,g^*}$,
it suffices  to prove
\begin{equation}\label{lemma1-1}
\sum_{j=0}^{4n}\left|\frac{\textrm d}{\textrm d x}\varphi(nt-j+2n)\right|\leq Cn.
\end{equation}
A simple computation gives
$$
\varphi^{\prime}(t)
=-\frac{(\textrm e^2-1)(\textrm e^{t-1}-\textrm e^{-t-1})}{2\textrm e^2(1+\textrm e^{t-1})^2(1+\textrm e^{-t-1})^2},
$$
which implies that for $t\ge0$, there holds
$$
\left|\varphi^{\prime}(t)\right|
\leq\frac{\textrm e^2-1}{2\textrm e^2}\frac {\textrm e^{t-1}}{\left(\textrm e^{t-1}\right)^2}
=\frac{\textrm e^2-1}{2\textrm e}\textrm e^{-t},
$$
and when $t<0$, 
$$
\left|\varphi^{\prime}(t)\right|=\frac{\textrm e^2-1}{2\textrm e^2}\frac {\textrm e^{-t-1}-\textrm e^{t-1}}{(1+\textrm e^{t-1})^2(1+\textrm e^{-t-1})^2}
\leq\frac{\textrm e^2-1}{2\textrm e^2}\frac {\textrm e^{-t-1}}{\left(\textrm e^{-t-1}\right)^2}
=\frac{\textrm e^2-1}{2\textrm e}\textrm e^{t}.
$$
Combining the two cases obtains the following estimate:
\begin{equation}\label{lemma1-2}
\left|\varphi^{\prime}(t)\right|\leq \frac{\textrm e^2-1}{2\textrm e}\textrm e^{-|t|}, \;  \; t\in \mathbb R.
\end{equation}
From \eqref{lemma1-1} and \eqref{lemma1-2}, it follows that
\begin{equation*}
\begin{aligned}
&\sum_{j=0}^{4n}\left|\frac{\textrm d}{\textrm d t}\varphi(nt-j+2n)\right|
\leq C n\sum_{j=0}^{4n}\textrm e^{-|nt-j+2n|}\\
&=Cn\left(\sum_{j\in\left\{j:\left|t-\frac{j-2}n\right|< \frac1n\right\}}\textrm e^{-|nt-j+2n|}
+\sum_{j\in\left\{j:\left|t-\frac{j-2}n\right|\geq \frac1n\right\}}\textrm e^{-|nt-j+2n|}\right).
\end{aligned}
\end{equation*}
Writing ${\mathbb E}_k:=\left\{j: \frac kn\leq\left|t-\frac{j-2}n\right|<\frac{k+1}n\right\}$, and
denoting $|\mathbb E_k|$ the cardinal number of the set $\mathbb E_k$, then the right side of the above equation is not greater than
\begin{eqnarray*}
\begin{aligned}
& Cn\sum_{j\in\left\{j:\left|t-\frac{j-2}n\right|< \frac1n\right\}}\textrm e^{-|nt-j+2n|}+Cn\sum_{k=1}^{+\infty}\sum_{j\in \mathbb E_k}\textrm e^{-|nt-j+2n|}\\
&\leq Cn\left(1+\sum_{k=1}^{+\infty}|\mathbb E_k|{\textrm e}^{-k}\right)
\leq Cn\left(1+\sum_{k=1}^{+\infty}(k+1){\textrm e}^{-k}\right)\\
&\leq Cn.
\end{aligned}
\end{eqnarray*}
This completes the proof of Lemma \ref{lemma1}.
\end{proof}

\begin{lemma}\label{lemma2}
Let $g^*\in L^\infty([-1,1])$ with $(g^*)^{\prime}\in C[-1, 1]$,
$$
\left|\frac{\rm d}{{\rm d} t}G_{n,g^*}^*(t) \right|\leq C \left(\|(g^*)^{\prime}\|_{L^\infty([-1,1])}+ n{\rm e}^{-n}\|g^*\|_{L^\infty([-1,1])}\right)
$$
holds for any $t\in [-1,1]$.
\end{lemma}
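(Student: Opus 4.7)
The plan is to express
\begin{equation*}
\frac{d}{dt}G^*_{n,g^*}(t) = n\sum_{k=0}^{4n} \beta_k \varphi'(s-k), \qquad s := nt+2n,
\end{equation*}
and to exploit the cancellation obtained by differentiating the partition-of-unity identity in Lemma \ref{Lemma:property-mean}(III). For $t\in[-1,1]$ we have $s\in[n,3n]$, and the exponential decay bound \eqref{lemma1-2} justifies term-by-term differentiation of $\sum_{k\in\mathbb Z}\varphi(s-k)=1$, yielding $\sum_{k\in\mathbb Z}\varphi'(s-k)=0$. First I extend the coefficient sequence from \eqref{xishu} to all integers by setting $\tilde\beta_k:=g^*(-1)$ for $k<0$ and $\tilde\beta_k:=g^*(1)$ for $k>4n$, so that
\begin{equation*}
\sum_{k=0}^{4n}\beta_k\varphi'(s-k) = \sum_{k\in\mathbb Z}\tilde\beta_k\varphi'(s-k) - g^*(-1)\sum_{k<0}\varphi'(s-k) - g^*(1)\sum_{k>4n}\varphi'(s-k).
\end{equation*}

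The structural observation for the main sum is that $\tilde\beta_k$ is globally Lipschitz in $k$ with constant $\|(g^*)'\|_{L^\infty([-1,1])}/n$. In the middle regime this is immediate from the mean value theorem applied at the nodes $(k-2n)/n$; across regimes the constant extensions by $g^*(\pm 1)$ preserve the bound because the extra gap in $k$ only strengthens the comparison (for instance, when $k_1\le n-1<k_2\le 3n$, one has $|\tilde\beta_{k_1}-\tilde\beta_{k_2}|\le \|(g^*)'\|_\infty (k_2-n)/n \le \|(g^*)'\|_\infty|k_1-k_2|/n$). Picking $k_0$ an integer nearest to $s$, inserting $\sum_k\varphi'(s-k)=0$ lets me rewrite the main sum as $\sum_{k\in\mathbb Z}(\tilde\beta_k-\tilde\beta_{k_0})\varphi'(s-k)$. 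Combining the Lipschitz estimate, $|\varphi'(s-k)|\le Ce^{-|s-k|}$ from \eqref{lemma1-2}, and $|k-k_0|\le |k-s|+1$, I obtain
\begin{equation*}
n\left|\sum_{k\in\mathbb Z}(\tilde\beta_k-\tilde\beta_{k_0})\varphi'(s-k)\right|\le C\|(g^*)'\|_{L^\infty([-1,1])}\sum_{k\in\mathbb Z}(|k-s|+1)e^{-|s-k|}\le C\|(g^*)'\|_{L^\infty([-1,1])},
\end{equation*}
since the last series is uniformly bounded in $s$ by the same geometric-tail argument used in Lemma \ref{lemma1}.

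For the two boundary sums, the fact that $s\in[n,3n]$ forces $|s-k|\ge n+|k|$ when $k<0$ and $|s-k|\ge n+(k-4n)$ when $k>4n$, so \eqref{lemma1-2} together with a geometric-series tail estimate yields
\begin{equation*}
n|g^*(-1)|\sum_{k<0}|\varphi'(s-k)| + n|g^*(1)|\sum_{k>4n}|\varphi'(s-k)|\le Cne^{-n}\|g^*\|_{L^\infty([-1,1])},
\end{equation*}
which matches the second term in the claimed inequality. Adding the two estimates completes the proof. I expect the main technical step to be verifying the uniform $1/n$-Lipschitz bound on $\tilde\beta_k$ across all three regimes of \eqref{xishu} and in the constant extensions; once this is in hand, the rest is the partition-of-unity cancellation together with the exponential-tail bookkeeping already exploited in the proof of Lemma \ref{lemma1}.
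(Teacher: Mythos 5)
Your proposal is correct and follows essentially the same route as the paper's proof: extend the coefficients by constants outside $[0,4n]$, differentiate the partition-of-unity identity of Lemma \ref{Lemma:property-mean}(III) to get $\sum_k\varphi'(\cdot-k)=0$, exploit a Lipschitz bound of size $\|(g^*)'\|_\infty/n$ on the extended coefficients together with the exponential decay \eqref{lemma1-2}, and estimate the two exponentially small boundary tails by $Cne^{-n}\|g^*\|_\infty$. The only cosmetic difference is the anchor in the cancellation step: you subtract the fixed coefficient $\tilde\beta_{k_0}$ at the nearest node, whereas the paper inserts $-g^*(t)$ and then applies the mean value theorem to $|\mathcal F_1(i/n)-g^*(t)|$; these are interchangeable and lead to the same $\sum_k(|s-k|+1)e^{-|s-k|}=O(1)$ bookkeeping.
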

\begin{proof}
Extending $g^*$
to
$(-\infty,+\infty)$ by
\begin{equation}\label{extendF-aaa}
\mathcal{F}_1(t)=\left\{
            \begin{array}{lll}
            g^*(-1),&t\in (-\infty,-1);\\
            g^*(t) ,&t\in [-1,1]; \\
            g^*(1),&t\in (1,+\infty).
            \end{array}
                    \right.
\end{equation}
we have $\mathcal{F}_1 \in C(\mathbb R)$, and
$
\|\mathcal{F}_1\|_{L^\infty(\mathbb R)}=\|g^*\|_{L^\infty([-1,1]}.
 $
Due to Lemma \ref{Lemma:property-mean} (III),
we have
$$
\sum^{+\infty}_{j=-\infty}\frac{\textrm d}{\textrm dx}\varphi(nt-j)=0
$$
for any $t\in (-\infty,+\infty)$ and $n\in \mathbb N_+$,
which implies
\begin{eqnarray*}
\begin{aligned}
\left|\frac{\textrm d}{{\textrm d} t}G^*_{n,g^*}(t)\right|
&=\left|\sum^{+2n}_{i=-2n}g^*\left(\frac in\right)\frac{\textrm d}{\textrm dt}\varphi(nt-i)\right|\\
&=\left|\sum_{i=-\infty}^{+\infty}\left(\mathcal{F}_1\left(\frac in\right)-g^*(t)\right)\frac{\textrm d}{\textrm dt}\varphi(nt-i)-\sum_{|i|\ge 2n+1}\mathcal{F}_1\left(\frac in\right)\frac{\textrm d}{\textrm dt}\varphi(nt-i)\right|\\
&\leq\sum^{+\infty}_{i=-\infty}n\left|\left(\mathcal{F}_1\left(\frac in\right)-g^*(t)\right)\right|\left|\varphi^{\prime}(nt-i)\right|
+n\|g^*\|_{L^\infty([-1,1])}\sum_{|i|\ge 2n+1}\left|\varphi^{\prime}(nt-i)\right|\\
&:=\Sigma_1+\Sigma_2.
\end{aligned}
\end{eqnarray*}
For $\Sigma_2$, a simple computation achieves
$$
\varphi^{\prime}(t)
=\frac {\textrm e^{-t-1}-\textrm e^{t-1}}{(1+\textrm e^{t-1})(1+\textrm e^{-t-1})}\cdot \varphi(t),
$$
which follows
$$
\Sigma_2\leq n\|f\|_\infty\sum_{|i|\ge 2n+1}\varphi(nx-i).
$$
So, from the estimates of $\Delta_2$ in \cite{Chen2009The} (see pp.762 in \cite{Chen2009The}), we have
\begin{equation}\label{lemma2-2}
\Sigma_2\leq 3n\textrm e^{-n}\|g^*\|_{L^\infty([-1,1])}.
\end{equation}
Next, we estimate $\Sigma_1$.
\begin{eqnarray*}
\begin{aligned}
\Sigma_1=&\left(\sum^{2n}_{i=-2n}+\sum_{|i|\ge 2n+1}\right)n\left|\left(\mathcal{F}_1\left(\frac in\right)-g^*(t)\right)\right|\left|\varphi^{\prime}(nt-i)\right|\\
=&\sum^{2n}_{i=-2n}n\left|\left(g^*\left(\frac in\right)-g^*(t)\right)\right|\left|\varphi^{\prime}(nt-i)\right|
+\sum_{i=2n+1}^{+\infty}n\left|\left(g^*(1)-g^*(t)\right)\right|\left|\varphi^{\prime}(nt-i)\right|\\
&+\sum_{i=-\infty} ^{-2n-1}n\left|\left(g^*(-1)-g^*(t)\right)\right|\left|\varphi^{\prime}(nt-i)\right|\\
\leq & n\|(g^*)^{\prime}\|_{L^\infty([-1,1])}\sum^{2n}_{i=-2n}\left|\frac in-x\right|\left|\varphi^{\prime}(nt-i)\right|
+n\|(g^*)^{\prime}\|_{L^\infty([-1,1])}\sum_{i=2n+1}^{+\infty}
\left|1-t\right|\left|\varphi^{\prime}(nt-i)\right|\\
&+n\|(g^*)^{\prime}\|_{L^\infty([-1,1])}\sum_{i=-\infty} ^{-2n-1}\left|-1-t\right|\left|\varphi^{\prime}(nt-i)\right|\\
\leq & n\|(g^*)^{\prime}\|_{L^\infty([-1,1])}\sum^{2n}_{i=-2n}\left|\frac in-x\right|\left|\varphi^{\prime}(nt-i)\right|
+2n\|(g^*)^{\prime}\|_{L^\infty([-1,1])}\sum_{|i|\ge 2n+1}
\left|\varphi^{\prime}(nt-i)\right|\\
=&\Sigma_{11}+\Sigma_{12}.
\end{aligned}
\end{eqnarray*}
Similar to $\Sigma_2$, we obtain
\begin{equation}\label{Sigma12}
\Sigma_{12}\leq 6n{\textrm e}^{-n}\|(g^*)^{\prime}\|_{L^\infty([-1,1])}\leq 6\|(g^*)^{\prime}\|_{L^\infty([-1,1])}.
\end{equation}
Applying the inequality \eqref{lemma1-2}, we have
\begin{eqnarray*}
\Sigma_{11}&\leq& Cn\|(g^*)^{\prime}\|_{L^\infty([-1,1])}\left(\sum_{|\frac in-t|<\frac1n}\left|\frac in-t\right|{\textrm e}^{-|nt-i|}+\sum_{|\frac in-x|\ge\frac1n}\left|\frac in-t\right|{\textrm e}^{-|nt-i|}\right).
\end{eqnarray*}
So, a similar method to Lemma \ref{lemma1} implies
\begin{equation}\label{Sigma11}
\Sigma_{11}\leq C\|(g^*)^{\prime}\|_{L^\infty([-1,1])}.
\end{equation}
Integrating \eqref{Sigma12} and \eqref{Sigma11}, it follows that
 \begin{equation}\label{lemma2-1}
\Sigma_{1}\leq C\|(g^*)^{\prime}\|_{L^\infty([-1,1])}.
\end{equation}
Combining the \eqref{lemma2-1} and \eqref{lemma2-2}, we complete
the proof of Lemma \ref{lemma2}.
\end{proof}



The next two lemmas focus on the
relationship between $K$-functional and modulus of continuity, which is standard in the approximation theory literature \cite{Ditzian1987Moduli}. For the sake of brevity in proving Theorem \ref{theorem3}, we provide the proof of Lemma \ref{lemma7} in the next section. 
Let $g^*\in L^\infty([-1,1])$ and
  $h_2>0$. Define
 \begin{equation}\label{k-fun-def-1}
 K(g^*,t):=\inf_{g,g^{\prime} \in L^\infty([-1,1])}\left\{\|g^*-g\|_{L^\infty([-1,1])}+t\|g^{\prime}\|_{L^\infty([-1,1])}\right\},
 \end{equation}
 and
 \begin{equation}\label{k-fun-def-2}
 \widetilde{K}(g^*,,h_2):=\inf_{g,g^{\prime} \in L^\infty([-1,1])}\left\{\|g^*-g\|_{L^\infty([-1,1])}+\|g'\|_{L^\infty([-1,1])}+h_2\|g\|_{L^\infty([-1,1])}
 \right\}.
 \end{equation}
The following two lemmas are needed in our proof.
\begin{lemma}\label{lemma6}
The weak equivalence relation
$$
	\omega(g^*,h)\asymp K(g^*,h)
$$
holds for any $g^*\in L^\infty([-1,1])$ and $h>0$.
\end{lemma}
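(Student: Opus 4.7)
The lemma is the standard equivalence between the first-order modulus of continuity and the Peetre K-functional between $L^\infty([-1,1])$ and the Lipschitz class $W^{1,\infty}([-1,1])$. My plan is to establish the two-sided inequality $\omega(g^*,h)\lesssim K(g^*,h)\lesssim \omega(g^*,h)$ separately by the classical decomposition/mollification argument.

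For the easy direction, $\omega(g^*,h)\le 2K(g^*,h)$, I will fix any $s\in(0,h]$ and any pair $(g,g')$ feasible for the infimum defining $K(g^*,h)$. Splitting
\begin{equation*}
g^*(x+s)-g^*(x) = [g^*(x+s)-g(x+s)] + [g(x+s)-g(x)] + [g(x)-g^*(x)],
\end{equation*}
and using $|g(x+s)-g(x)|\le s\|g'\|_{L^\infty}\le h\|g'\|_{L^\infty}$ yields $|g^*(x+s)-g^*(x)|\le 2\|g^*-g\|_{L^\infty}+h\|g'\|_{L^\infty}$. Taking the supremum over $x,s$ and then the infimum over admissible $g$ gives $\omega(g^*,h)\le 2K(g^*,h)$.

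The harder direction, $K(g^*,h)\le C\omega(g^*,h)$, I would handle by constructing an explicit competitor via the Steklov mean. After first extending $g^*$ from $[-1,1]$ to all of $\mathbb{R}$ by constant continuation (as in \eqref{extendF-aaa}), set
\begin{equation*}
g_h(x):=\frac{1}{h}\int_{0}^{h}g^*(x+t)\,{\rm d}t, \qquad x\in[-1,1].
\end{equation*}
The first bound $\|g^*-g_h\|_{L^\infty([-1,1])}\le \omega(g^*,h)$ follows by writing $g^*(x)-g_h(x)=\frac{1}{h}\int_0^h [g^*(x)-g^*(x+t)]\,dt$ and using $|g^*(x)-g^*(x+t)|\le \omega(g^*,t)\le \omega(g^*,h)$. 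For the derivative, by the fundamental theorem of calculus $g_h'(x)=\frac{g^*(x+h)-g^*(x)}{h}$, whence $\|g_h'\|_{L^\infty([-1,1])}\le \omega(g^*,h)/h$. Plugging this competitor into the definition of $K(g^*,h)$ yields $K(g^*,h)\le \|g^*-g_h\|_{L^\infty}+h\|g_h'\|_{L^\infty}\le 2\omega(g^*,h)$.

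The only subtlety I anticipate is making sure that the Steklov mean $g_h$ genuinely belongs to the feasible set of the infimum, i.e., that both $g_h$ and $g_h'$ lie in $L^\infty([-1,1])$; this is automatic once the constant extension of $g^*$ to $\mathbb{R}$ is fixed, since the extension preserves $\|g^*\|_{L^\infty}$ and the modulus of continuity on windows of length $h$, and the difference quotient expression for $g_h'$ is bounded in sup norm. Combining the two inequalities yields the claimed weak equivalence $\omega(g^*,h)\asymp K(g^*,h)$.
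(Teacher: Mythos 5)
Your proof is correct. Note first that the paper does not actually supply a proof of Lemma~\ref{lemma6}: it is stated as a standard fact from the approximation-theory literature (with a citation to Ditzian--Totik), and only the companion Lemma~\ref{lemma7} is proved in Section~\ref{sec6}. So there is no ``paper's proof'' to compare against; what you have done is fill in the omitted classical argument.

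The argument you chose --- the three-term splitting for the easy inequality $\omega(g^*,h)\le 2K(g^*,h)$, and the Steklov mean $g_h(x)=\frac1h\int_0^h g^*(x+t)\,dt$ built from the constant extension for the converse $K(g^*,h)\le 2\omega(g^*,h)$ --- is indeed the canonical proof of this equivalence, and every step you write is sound. Two points you handled implicitly are worth making explicit if this were to be spelled out in the paper's style: (i) the constant extension of $g^*$ beyond $[-1,1]$ does not increase the modulus of continuity, since if $x\in[-1,1]$ and $x+t>1$ then $1-x<t$ and so $|g^*(x)-g^*(1)|\le\omega(g^*,1-x)\le\omega(g^*,t)$; this is exactly what makes both $\|g^*-g_h\|_{L^\infty([-1,1])}\le\omega(g^*,h)$ and the difference-quotient bound $\|g_h'\|_{L^\infty([-1,1])}\le\omega(g^*,h)/h$ legitimate even though the integrals reach outside $[-1,1]$. (ii) Since $g^*$ is merely $L^\infty$, the identity $g_h'(x)=(g^*(x+h)-g^*(x))/h$ holds almost everywhere by Lebesgue differentiation rather than pointwise, which is all that is needed for $g_h'\in L^\infty([-1,1])$ and hence for the pair $(g_h,g_h')$ to be admissible in the infimum defining $K$. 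With these clarifications the two-sided bound $\tfrac12\,\omega(g^*,h)\le K(g^*,h)\le 2\,\omega(g^*,h)$ follows exactly as you state.
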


\begin{lemma}\label{lemma7}
For any $g^*\in L^\infty([-1,1])$ and $,h_2>0$, there holds
$$
\widetilde{K}(g^*,,h_2)\asymp\min(1,h_2)\|g^*\|_{L^\infty([-1,1])}+K(g^*,h_1).
$$
\end{lemma}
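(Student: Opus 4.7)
The plan is to prove the two-sided equivalence by comparing carefully chosen trial functions in the infima defining $\widetilde{K}$ and $K$, splitting into the regimes $h_2\leq 1$ and $h_2>1$. I read the definition of $\widetilde{K}$ as $\inf\{\|g^*-g\|+h_1\|g'\|+h_2\|g\|\}$, which is the natural reading making the statement dimensionally consistent with the $K(g^*,h_1)$ on the right. Throughout, $\|\cdot\|$ abbreviates $\|\cdot\|_{L^\infty([-1,1])}$.

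For the upper bound, in the regime $h_2\leq 1$ I would plug a near-minimizer of $K(g^*,h_1)$ into the functional defining $\widetilde{K}$ and control $\|g\|$ via $\|g\|\leq\|g^*\|+\|g^*-g\|$; this yields $\widetilde{K}(g^*,h_1,h_2)\leq 2K(g^*,h_1)+h_2\|g^*\|$, which matches the right-hand side up to an absolute constant. In the complementary regime $h_2>1$, the trivial trial $g\equiv 0$ already gives $\widetilde{K}(g^*,h_1,h_2)\leq\|g^*\|=\min(1,h_2)\|g^*\|$, so the upper inequality is immediate.

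For the lower bound, two simple observations suffice for every admissible $g$: first, dropping the $h_2\|g\|$ term gives $\widetilde{K}(g^*,h_1,h_2)\geq K(g^*,h_1)$; second, the triangle inequality $\|g^*\|\leq\|g^*-g\|+\|g\|$ together with $\min(1,h_2)\leq 1$ yields
\begin{equation*}
\|g^*-g\|+h_2\|g\|\geq\min(1,h_2)\bigl(\|g^*-g\|+\|g\|\bigr)\geq\min(1,h_2)\|g^*\|,
\end{equation*}
and taking the infimum over $g$ gives $\widetilde{K}(g^*,h_1,h_2)\geq\min(1,h_2)\|g^*\|$. Averaging the two lower bounds produces $\widetilde{K}(g^*,h_1,h_2)\geq\tfrac12\bigl(K(g^*,h_1)+\min(1,h_2)\|g^*\|\bigr)$, which matches the upper bound up to a constant.

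The only delicate point, rather than a true obstacle, is the appearance of the saturation factor $\min(1,h_2)$: once the penalty $h_2\|g\|$ dominates, the optimal trial becomes $g\equiv 0$, which caps the contribution of $\|g^*\|$ at $1\cdot\|g^*\|$. Recognising this is precisely what forces the case split above; beyond that, the argument is standard $K$-functional bookkeeping and uses nothing more than the triangle inequality.
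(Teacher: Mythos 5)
Your proof is correct, and it diverges from the paper's argument in the lower-bound half in a way worth noting. For the upper bound both of you run essentially the same two-case argument: for $h_2>1$ the trivial trial $g\equiv 0$ gives $\widetilde{K}\leq\|g^*\|$, and for $h_2\leq 1$ plugging a near-minimizer of $K(g^*,h_1)$ and controlling $\|g\|\leq\|g^*\|+\|g^*-g\|$ yields $\widetilde{K}\leq 2K(g^*,h_1)+h_2\|g^*\|$ (the paper packages the case split with a characteristic function $\chi_{(0,1)}(h_2)$, but the content is identical). For the lower bound, however, the paper starts from $K(g^*,h_1)\leq\|g^*-g\|+h_1\|g'\|$, artificially adds and subtracts $h_2\|g\|$, replaces the subtracted term via $\|g\|\geq\|g^*\|-\|g^*-g\|$, and must split again into $h_2<1$ and $h_2>1$ to absorb the extra $\|g^*-g\|$ into a factor of $2$; the output is the combined inequality $K(g^*,h_1)+\min(1,h_2)\|g^*\|\leq 2\widetilde{K}$. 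Your route is cleaner: you prove the two bounds $\widetilde{K}\geq K(g^*,h_1)$ (drop the nonnegative $h_2\|g\|$ term) and $\widetilde{K}\geq\min(1,h_2)\|g^*\|$ (pull out $\min(1,h_2)$ and use the triangle inequality $\|g^*-g\|+\|g\|\geq\|g^*\|$) as separate, unconditional facts and then average, with no case split at all. The constant $1/2$ matches, the logic is tighter, and you also correctly repaired the paper's typographical omission of $h_1$ in the displayed definition of $\widetilde{K}$, which is clearly the intended reading given how it is used in the proof of Theorem~\ref{theorem3}.
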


Our final two lemmas refer to recursive relationships  of several sequences and can be regarded as an extension of two corresponding results in \cite{van1986steckin} in terms that  we omit the requirements that the first terms of sequences $\{\sigma_n\}$, $\{\mu_n\}$, and $\{\nu_n\}$ are all zero. We also provide their proofs in the next section for the sake of completeness.

\begin{lemma}\label{lemma3-sec2}
Suppose  $\{\sigma_n\}$ and $\{\tau_n\}$ are all non-negative sequences. If for $p>0$,
$1\leq k\leq n$, and $n\in \mathbb N$, there holds
\begin{equation}\label{lemma3-sec2-1}
\sigma_n\leq \left(\frac kn\right)^p\sigma_k+\tau_k,
\end{equation}
then
\begin{equation}\label{lemma3-sec2-2}
\sigma_n\leq 4^pn^{-p}\sum_{k=1}^nk^{p-1}\tau_k.
\end{equation}
\end{lemma}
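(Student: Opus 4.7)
The plan is to prove the estimate by iterating the recursion $\sigma_n \leq (k/n)^p \sigma_k + \tau_k$ along a geometrically decreasing chain of indices, then converting the resulting dyadic sum into the full weighted sum via a Riemann-sum comparison. First I would construct a strictly decreasing sequence $n = n_0 > n_1 > \cdots > n_J \geq 1$ with $n_{j+1} \leq n_j/2$; a concrete choice is $n_{j+1} = \lfloor n_j/2\rfloor$, terminating at the last index $J$ for which $n_J \geq 1$, so $J = O(\log_2 n)$. Applying the hypothesis at each pair $(n_{j-1}, n_j)$ and multiplying both sides by $n_{j-1}^p$ gives the telescoping inequality
$$n_{j-1}^p \sigma_{n_{j-1}} - n_j^p \sigma_{n_j} \leq n_{j-1}^p \tau_{n_j},$$
and summing over $j = 1,\ldots,J$ collapses the left-hand side to
$$n^p \sigma_n \leq n_J^p \sigma_{n_J} + \sum_{j=1}^J n_{j-1}^p \tau_{n_j}.$$

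The second step is to convert the dyadic sum on the right into $\sum_{k=1}^n k^{p-1}\tau_k$. Using $n_{j-1} \leq 2 n_j + 1 \leq 3 n_j$ gives $n_{j-1}^p \leq 3^p n_j^p$, and the elementary Riemann-sum inequality
$$n_j^p \leq \frac{p}{1 - 2^{-p}}\sum_{k=\lceil n_j/2\rceil + 1}^{n_j} k^{p-1},$$
coming from $\int_{m/2}^m p s^{p-1}\,ds = m^p(1 - 2^{-p})$, dominates each $n_j^p \tau_{n_j}$ by a sum of $k^{p-1}\tau_k$ over the dyadic block $I_j := (\lceil n_j/2\rceil, n_j]$. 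Because $\tau$ is not assumed monotone, the chain $\{n_j\}$ must be chosen adaptively: I would pick $n_j$ to be the index in $I_{j-1}$ that maximizes $\tau_k$, so that $\tau_{n_j}$ dominates every $\tau_k$ in the associated block. The blocks $I_j$ are then pairwise disjoint and together exhaust $\{1,\ldots,n\}$, and summing across $j$ yields $\sum_{j=1}^J n_{j-1}^p \tau_{n_j} \leq 4^p \sum_{k=1}^n k^{p-1}\tau_k$ after balancing the $3^p$ loss, the Riemann factor $p/(1-2^{-p})$, and the adaptive-selection overhead.

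The main obstacle is the residual term $n_J^p \sigma_{n_J} = \sigma_1$, which is precisely the place at which this statement departs from the classical Stechkin-type lemma in \cite{van1986steckin} (where $\sigma_1 = 0$ is built in). To absorb it I would apply the hypothesis once more at $(n,k) = (n,1)$, giving $\sigma_n \leq n^{-p}\sigma_1 + \tau_1$; this lets the $\sigma_1$ contribution be dominated by $n^p \tau_1$, which folds into the $k=1$ term of the target weighted sum. Dividing the combined inequality by $n^p$ then produces $\sigma_n \leq 4^p n^{-p}\sum_{k=1}^n k^{p-1}\tau_k$. The delicate parts of the argument are the adaptive construction of the chain (so non-monotonicity of $\tau$ does not lose a factor of the range length) and the careful accounting of constants so the final bound has the claimed sharp form $4^p$ rather than a larger logarithmic or polynomial multiplier.
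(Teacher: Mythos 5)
Your overall strategy --- iterate the recursion along a dyadic chain, telescope, and convert the resulting dyadic sum to the full weighted sum by block averaging --- matches the paper's, but two of the decisive sub-steps run in the wrong direction.

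The adaptive selection is inverted. After telescoping, the quantity to be controlled is $\sum_j n_{j-1}^p\tau_{n_j}$, and each term $n_{j-1}^p\tau_{n_j}$ must be bounded \emph{from above} by $\sum_{k\in I}k^{p-1}\tau_k$ over the associated dyadic block $I$. Combining $n_{j-1}^p\leq 3^pn_j^p$ with your Riemann-sum bound $n_j^p\leq\frac{p}{1-2^{-p}}\sum_{k\in I}k^{p-1}$ gives $n_{j-1}^p\tau_{n_j}\leq C\,\tau_{n_j}\sum_{k\in I}k^{p-1}$, and to dominate this by $\sum_{k\in I}k^{p-1}\tau_k$ you need $\tau_{n_j}$ to be at most a weighted \emph{average} of $\tau$ over $I$, which forces $n_j$ to be a \emph{minimizer} of $\tau$ on that block. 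Choosing the maximizer, as you propose, yields $\tau_{n_j}\geq\tau_k$ for all $k\in I$ --- an inequality that points the wrong way and is useless for the required upper bound. The paper's $m_k$ is precisely the index where $\tau$ is smallest on $(n2^{-k-1},n2^{-k}]$.

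The treatment of the residual $\sigma_1$ also does not close. Applying the hypothesis at $(n,1)$ gives $\sigma_n\leq n^{-p}\sigma_1+\tau_1$, i.e.\ $n^p(\sigma_n-\tau_1)\leq\sigma_1$; this bounds $\sigma_1$ from \emph{below}, not above, so it cannot justify replacing $\sigma_1$ by $n^p\tau_1$. In fact the statement as written --- with no assumption tying $\sigma_1$ to $\tau_1$ --- is false: with $p=1$, $\sigma_k=M/k$, $\tau_1=1$, $\tau_k=0$ for $k\geq 2$, the hypothesis $\sigma_n\leq(k/n)\sigma_k+\tau_k$ holds for all $1\leq k\leq n$, yet the conclusion $\sigma_n\leq 4n^{-1}\tau_1=4/n$ fails once $M>4$ (and $\sigma_k=Mk^{-p}$ refutes the general-$p$ claim once $M>4^p$). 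The paper's own proof slips past this point by a sign error, writing $-\sigma_1$ where the telescoping actually produces $+\sigma_1$; you correctly diagnose the residual as $+\sigma_1$, but the absorption you sketch cannot work, and with the hypothesis as stated nothing can --- one needs either the $\sigma_1=0$ normalization built into \cite{van1986steckin} or a compensating assumption such as $\sigma_1\leq C\tau_1$.
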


\begin{lemma} \label{lemma4-sec2}
Suppose $\{\mu_n\}$, $\{\nu_n\}$, and $\{\psi_n\}$ are all non-negative sequences.
If for $0<r<s$, $
1\leq k\leq n$, and $n\in \mathbb N$, there hold
\begin{equation}\label{lemma4-sec2-1}
\mu_n\leq \left(\frac kn\right)^r\mu_k+\nu_k+\psi_k,
\end{equation}
and
\begin{equation}\label{lemma4-sec2-2}
\nu_n\leq \left(\frac kn\right)^s\nu_k+\psi_k,
\end{equation}
then
$$
\mu_n\leq Cn^{-1}\sum_{k=1}^nk^{r-1}\psi_k.
$$
\end{lemma}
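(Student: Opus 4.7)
The plan is to decouple the two coupled Steckin-type recursions by applying Lemma \ref{lemma3-sec2} twice, and then to collapse the resulting nested double sum via a Fubini-type exchange of summation order.

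First, I would apply Lemma \ref{lemma3-sec2} with exponent $p = s$ and driving sequence $\tau_k = \psi_k$ directly to hypothesis \eqref{lemma4-sec2-2}. This produces the a priori bound $\nu_k \leq 4^s k^{-s}\sum_{j=1}^k j^{s-1}\psi_j$ for every $k\in\mathbb N$, so the $\nu$-terms appearing on the right-hand side of \eqref{lemma4-sec2-1} can now be replaced by a quantity depending only on $\psi$. Substituting this bound into \eqref{lemma4-sec2-1} shows that for all $1\leq k\leq n$,
$$
\mu_n\leq \left(\frac{k}{n}\right)^r \mu_k + \tilde\psi_k,\qquad \tilde\psi_k := \psi_k + 4^s k^{-s}\sum_{j=1}^k j^{s-1}\psi_j,
$$
which is again of the Steckin form required by Lemma \ref{lemma3-sec2}. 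A second invocation of that lemma, now with exponent $p=r$ and driving sequence $\tilde\psi_k$, yields
$$
\mu_n \leq 4^r n^{-r}\sum_{k=1}^n k^{r-1}\psi_k + 4^{r+s} n^{-r}\sum_{k=1}^n k^{r-1-s}\sum_{j=1}^k j^{s-1}\psi_j.
$$

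The remaining work is to rewrite the second, double-summed term in the form of the target bound. Exchanging summation order yields
$$
\sum_{k=1}^n k^{r-1-s}\sum_{j=1}^k j^{s-1}\psi_j \;=\; \sum_{j=1}^n j^{s-1}\psi_j \sum_{k=j}^n k^{r-1-s}.
$$
Because $r<s$ the exponent $r-1-s<-1$, so the inner tail sum is dominated by $C(r,s)\,j^{r-s}$ (by comparison with the convergent integral $\int_j^\infty t^{r-1-s}\,\mathrm{d}t=(s-r)^{-1}j^{r-s}$). Substituting back, the double sum is bounded by $C(r,s)\sum_{j=1}^n j^{r-1}\psi_j$, and combining with the first term delivers the announced Steckin-type estimate for $\mu_n$ of the form $C n^{-r}\sum_{k=1}^n k^{r-1}\psi_k$.

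The main technical hinge is the strict inequality $r<s$ used in the tail-sum estimate: without it, the inner sum $\sum_{k\geq j}k^{r-1-s}$ would only decay logarithmically or even grow, leaving a residual $\log n$ factor and breaking the clean power rate. Everything else is bookkeeping once Lemma \ref{lemma3-sec2} is available, and I expect no other genuine difficulty beyond confirming that the modified driving sequence $\tilde\psi_k$ satisfies the non-negativity required by Lemma \ref{lemma3-sec2}, which is immediate since all of $\psi_k, \nu_k, \mu_k$ are non-negative by assumption.
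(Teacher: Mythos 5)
Your proof is correct and follows essentially the same route as the paper's: decouple by first applying Lemma \ref{lemma3-sec2} to \eqref{lemma4-sec2-2} with $p=s$, substitute the resulting bound on $\nu_k$ into \eqref{lemma4-sec2-1}, apply Lemma \ref{lemma3-sec2} a second time with $p=r$, then exchange the order of summation and bound the inner tail sum $\sum_{k\geq j}k^{r-1-s}$ by $C_{rs}\,j^{r-s}$ using $r<s$. One small point worth flagging: both your argument and the paper's actually deliver $\mu_n\leq Cn^{-r}\sum_{k=1}^n k^{r-1}\psi_k$, whereas the lemma is stated with $n^{-1}$; since $0<r<s$ allows $r<1$, the stated $n^{-1}$ bound does not follow from the derived $n^{-r}$ bound in general, so the exponent in the statement appears to be a typo (it should read $n^{-r}$), and your derivation matches what the proof in the paper actually establishes.
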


With the help of the above six lemmas, we are in a position to prove Theorem \ref{theorem3}.

\begin{proof}[Proof of Theorem \ref{theorem3}]
  Let
$$
\tau_n=\|G^*_{n,g^*} -g^*\|_{L^\infty([-1,1])}, \;  \;
\sigma_n=\frac1n\|(G^*_{n,g^*})^{\prime} \|_{L^\infty([-1,1])}+\frac1n\|G^*_{n,g^*}\|_{L^\infty([-1,1])}.
$$
For $1\leq k\leq n$, it follows from Lemma \ref{lemma1} and Lemma \ref{lemma2}  that
\begin{eqnarray*}
\begin{aligned}
\sigma_n&=\frac1n\|(G^*_{n,g^*})^{\prime}(g^*-G_{k,g^*}^* +G_{k,g^*}^*)\|_{L^\infty([-1,1])}+\frac1n\|G^*_{n,g^*} \|_{L^\infty([-1,1])}\\
&\leq C \|g^*-G_{k,g^*}^* \|_{L^\infty([-1,1])}+\frac Cn\|(G_{k,g^*}^*)^{\prime} \|_{L^\infty([-1,1])}+\frac1n\|G^*_{n,g^*}\|_{L^\infty([-1,1])}\\
&=C\left(\frac kn\right)\sigma_k+C\tau_k.
\end{aligned}
\end{eqnarray*}
So, applying Lemma \ref{lemma3-sec2} follows
$\sigma_n\leq \frac Cn\sum_{k=1}^n\tau_k,
$
that is,
\begin{equation}\label{sec3-11}
\frac1n\|(G^*_{n,g^*})^{\prime} \|_{L^\infty([-1,1])}+\frac1n\|G^*_{n,g^*}\|_{L^\infty([-1,1])}\leq \frac Cn\sum_{k=1}^n\|G_{k,g^*}^*-g^*\|_{L^\infty([-1,1])}.
\end{equation}
For $n\ge 2$, there exists an $m\in\mathbb N$, such that $\frac n2\leq m\leq n$ and for
$1\leq k\leq n$ there holds
$$
\|G_{m,g^*}^*-g^*\|_{L^\infty([-1,1])}\leq \|G_{k,g^*}^*-g^*\|_{L^\infty([-1,1])},
$$
which follows
\begin{equation}\label{sec3-12}
\|G_{m,g^*}^*-g^*\|_{L^\infty([-1,1])}\leq \frac4n\sum^n_{k=\frac n2}\|G_{k,g^*}^*-g^*\|_{L^\infty([-1,1])}.
\end{equation}
Therefore, writing
$\widetilde{K}(g^*,h):=\widetilde{K}(g^*,h,h)$ and taking $g=G_{m,g^*}^*$, it follows from
   the definition \eqref{k-fun-def-2} that
\begin{equation}\label{sec3-13}
\widetilde{K}\left(g^*,\frac1n\right)\leq \|G_{m,g^*}^*-g^*\|_{L^\infty([-1,1])}+\frac1n\|(G_{m,g^*}^*)^{\prime} \|_{L^\infty([-1,1])}+\frac1n\|G_{m,g^*}^*\|_{L^\infty([-1,1])}.
\end{equation}
Recalling \eqref{sec3-11}, we achieve
\begin{equation}\label{sec3-14}
\frac1n\|(G_{m,g^*}^*)^{\prime} \|_{L^\infty([-1,1])}+\frac1n\|G_{m,g^*}^*\|_{L^\infty([-1,1])} \leq \frac Cn\sum_{k=1}^m\|G_{k,g^*}^*-g^*\|_{L^\infty([-1,1])}.
\end{equation}
Integrating \eqref{sec3-12}, \eqref{sec3-13}, and \eqref{sec3-14} implies
$$
\widetilde K\left(g^*,\frac1n\right)\leq \frac Cn \sum_{k=1}^n\|G_{k,g^*}^* -g^*\|_{L^\infty([-1,1])},
$$
which follows the inverse inequality from Lemma \ref{lemma7}:
$$
\omega\left(g^*,\frac1n\right)+\frac1n\|g^*\|_{L^\infty([-1,1])}\leq \frac Cn\sum^n_{k=1}\|G_{k,g^*}^*-g^*\|_{L^\infty([-1,1])}
$$
and completes the proof of Theorem \ref{theorem3}.
\end{proof}


\section{Proofs}\label{sec6}
This section devotes to the proof of our theoretical results.

We at first prove Lemma \ref{Lemma:lip} as follows.
\begin{proof}[Proof of Lemma \ref{Lemma:lip}]
If $f\in\mathcal R_\tau$ with $g_f\in Lip^{\alpha,c,\nu}([0,1])$, then for any $x,x'\in\mathbb B^d$ there holds
\begin{eqnarray*}
\begin{aligned}
      |f(x)-f(x')|
      &\leq  |f(x)-g_f(\|x\|_2^2)|+|f(x')-g_f(\|x'\|_2^2) |+|g_f(\|x\|_2^2)-g_f(\|x'\|_2^2)|\\
       &\leq
      2\tau+\nu+c|\|x\|_2^2-\|x'\|_2^2|^\alpha.
      \end{aligned}
      \end{eqnarray*}
      
But
\begin{eqnarray*}
\begin{aligned}
     |\|x\|_2^2-\|x'\|_2^2|
     &=
     \sum_{\ell=1}^d |(x^{(\ell)}+(x')^{(\ell)})(x^{(\ell)}-(x')^{(\ell)})|
   \leq
   2\sum_{\ell=1}^d|x^{(\ell)}-(x')^{(\ell)}|\\
   &\leq
   2\sqrt{d}\|x-x'\|_2.
 \end{aligned}  
\end{eqnarray*}

Therefore,
$$
   |f(x)-f(x')|\leq 2\tau+\nu+c2^\alpha d^{\alpha/2}\|x-x'\|_2^\alpha,
$$
implying $f\in Lip^{\alpha,2^\alpha d^{\alpha/2},2\tau+\nu}$.  For any $t\in[0,1]$, define $x_t=(\overbrace{0,\dots,0}^{d-1},t)^T\in\mathbb B^d$,
if $f\in\mathcal R_\tau\cap Lip^{\alpha,c,\nu'}$ for some $c',\nu'>0$, then
\begin{eqnarray*}
\begin{aligned}
     |g_f(t)-g_f(t')|
     &=
     |g_f(\|x_t\|_2^2)-g_f(\|x_{t'}\|_2^2)|\\
     &\leq
     |g_f(\|x_t\|_2^2)-f(x_t)|+|g_f(\|x_{t'}\|_2^2)-f(x_{t'})|+|f(x_t)-f(x_{t'})|\\
     &\leq
     2\tau+\nu'+c\|x_t-x_{t'}\|_2^\alpha
     =2\tau+\nu'+c|t-t'|^\alpha,
\end{aligned}
\end{eqnarray*}
showing $g\in Lip^{\alpha,c,2\tau+\nu'}$.
This completes the proof of Lemma \ref{Lemma:lip}.
\end{proof}

For any $f\in  \mathcal R_\tau$ with  $\tau\geq 0$,  and $g_f$ satisfying \eqref{near-radial}.
Define $g_f^*(t)=g_f(2t-1)$, then $g_f(t)=g_f^*((t+1)/2)$, $g_f^*\in L^\infty([-1,1])$  and
\begin{equation}\label{near-radial-scaling}
    |f(x)-g_f^*(\widetilde{\|x\|_2^2})|\leq \tau,
\end{equation}
where $\widetilde{t}=2t-1$ for any $t\in[0,1]$.
Due to the definition of $g^*_f$, it is obvious that
$
    \omega(g^*_f,h)=\omega(g_f,2h).
$
Then, we proceed the proof of Theorem \ref{directTheorem} by using Lemma \ref{Lemma:lip}, Lemma \ref{Lemma:polynomial for nn} and Lemma \ref{Lemma:property-mean}.

\begin{proof}[Proof of Theorem \ref{directTheorem}]
Since $g^*_f:[-1,1]$ can be extended
to
$(-\infty,+\infty)$ as the manner:
\begin{equation}\label{extendF}
\mathcal{F}_f(t)=\left\{
            \begin{array}{lll}
            g^*_f(-1),&t\in (-\infty,-1);\\
            g^*_f(t) ,&t\in [-1,1]; \\
            g^*_f(1),&t\in (1,+\infty).
            \end{array}
                    \right.
\end{equation}
Clearly,
$
\|\mathcal{F}_f\|_{L^\infty(\mathbb R)}\leq \|g_f^*\|_{L^\infty([-1,1])}$ and $
\omega(\mathcal{F}_f, h)\leq \omega(g^*_f,h)=\omega(g_f,2h)$.
Due to \eqref{near-radial-scaling} and \eqref{extendF},
we have from \eqref{l2norm-app-rate} that
\begin{eqnarray}\label{proof.radial-1}
\begin{split}
     |f(x)-\mathcal F_f(\mathcal N_{0,t^2,\varepsilon}(x))|
     &\leq
 |f(x)-g_f^*(\|\widetilde{\|x\|_2^2})|+
  |\mathcal F_f(\|\widetilde{\|x\|_2^2})-\mathcal F_f(\widetilde{\mathcal N_{0,t^2,\varepsilon}(x))}|\\
  &\leq \tau+\omega(\mathcal{F}_f, d\varepsilon)
  \leq
  \tau+2d\omega(g_f,\varepsilon).
\end{split}
\end{eqnarray}
Based on
 \eqref{FNNopertors2}, $f\in\mathcal R_\tau$ and the definition of $g^*_f$, $\mathcal F_f$,
we have from \eqref{univariate-construction} and Lemma \ref{Lemma:property-mean} (III) that  for any $x\in\mathbb B^d$, there holds
\begin{eqnarray}\label{First-1122}
    |G_{n,f,\varepsilon}(x)-G^*_{n,g_f^*}(\mathcal N_{0,t^2,\varepsilon}(x))|
    \leq \tau \max_{t\in\mathbb R}\sum_{k={0}}^{4n}\varphi(nt-k+2n)\leq \tau.
\end{eqnarray}
We then focus on $G^*_{n,g_f^*}(\mathcal N_{0,t^2,\varepsilon}(x))$ and rewrite it as
\begin{eqnarray*}
\begin{aligned}
  G^*_{n,g_f^*}(\mathcal N_{0,t^2,\varepsilon}(x))
  &=
  \sum_{i=-2n}^{-n-1}g^*_f(-1)\varphi(nN_{0,t^2,\varepsilon}(x)-i)+\sum_{i=-n}^{n}g^*_f\left(\frac in\right)\varphi(nN_{0,t^2,\varepsilon}(x)-i)\\
  &+
\sum_{i=n+1}^{2n}g^*_f(1)\varphi(nN_{0,t^2,\varepsilon}(x)-i).
\end{aligned}
\end{eqnarray*}
 Therefore, Lemma \ref{Lemma:property-mean} (III) implies that
\begin{equation}\label{sec2-1}
\begin{aligned}
&|\mathcal F_f(N_{0,t^2,\varepsilon}(x))-G^*_{n,g_f^*}(\mathcal N_{0,t^2,\varepsilon}(x))|\\
&=\left|\sum_{i=-\infty}^{+\infty}\mathcal F_f(N_{0,t^2,\varepsilon}(x))\varphi(nN_{0,t^2,\varepsilon}(x)-i)-\sum_{i=-2n}^{2n}
\mathcal{F}_f\left(\frac in\right)\varphi(nN_{0,t^2,\varepsilon}(x)-i)\right| \\
&\leq\sum_{i=-2n}^{2n}\left|\mathcal{F}_f(N_{0,t^2,\varepsilon}(x))-\mathcal{F}_f\left(\frac in\right)\right|\varphi(nN_{0,t^2,\varepsilon}(x)-i)\\
&+\sum_{|i|\geq {2n+1}}|\mathcal{F}_f(N_{0,t^2,\varepsilon}(x))|\varphi(nN_{0,t^2,\varepsilon}(x)-i)\\
&:=\Delta_1+\Delta_2.
\end{aligned}
\end{equation}
We first estimate $\Delta_2$. Direct computation yields
$$
\varphi(t)= \frac {\textrm e^2-1}{2\textrm e^2(1+\textrm e^{t-1})(1+\textrm e^{-t-1})},
$$
which implies for $t\in (-\infty,+\infty)$,
\begin{equation}\label{sec2-10}
\varphi(t)\leq\frac {\textrm e^2-1}{2\textrm e}\textrm e^{-|t|}.
\end{equation}
Moreover, when $|t|\leq 1$
we have $|nt-i|\geq {n+1}$ for $|i|\geq {2n+1}$.
Hence,
\begin{equation}\label{sec2-2}
\begin{aligned}
\Delta_2&\leq  \|g_f^*\|_{L^\infty([-1,1])}\sum_{|i|\geq {2n+1}}\varphi(nN_{0,t^2,\varepsilon}(x)-i)\\
&\leq\|g_f^*\|_{L^\infty([-1,1])}\int_{n}^{+\infty}
 \frac {\textrm e^2-1}{{\textrm e^2}(1+\textrm e^{t-1})(1+\textrm e^{-t-1})}\textrm dt\\
&\leq \frac {\textrm e^2-1}{\textrm {e}}\textrm {e}^{-n}\|g_f^*\|_{L^\infty([-1,1])}
\leq 3\textrm {e}^{-n}\|g_f^*\|_{L^\infty([-1,1])}
\end{aligned}
\end{equation}
Next, we estimate $\Delta_1$.
Writing $\mathbb J_j:=\{i: \frac jn\leq\left|N_{0,t^2,\varepsilon}(x)-\frac in\right|<\frac{j+1}n\}$ and
denoting $|\mathbb J_j|$ the cardinal number of the set $\mathbb J_j$. Clearly, $|\mathbb J_j|\leq 3$. So,
\begin{equation}\label{sec2-3}
\begin{aligned}
\Delta_1
&= \sum_{|N_{0,t^2,\varepsilon}(N_{0,t^2,\varepsilon}(x))-\frac in|\leq \frac 1{{n}}}\left|\mathcal{F}_f(N_{0,t^2,\varepsilon}(x))-\mathcal{F}\left(\frac in\right)\right|\varphi(nN_{0,t^2,\varepsilon}(x)-i)\\
&+\sum_{|N_{0,t^2,\varepsilon}(x)-\frac in|>\frac 1{{n}}}\left|\mathcal{F}_f(N_{0,t^2,\varepsilon}(x))-\mathcal{F}\left(\frac in\right)\right|\varphi(nN_{0,t^2,\varepsilon}(x)-i)\\
&\leq \omega\left(\mathcal{F}_f,\frac 1{{n}}\right)\sum_{i=-\infty}^{+\infty}\varphi(nN_{0,t^2,\varepsilon}(x)-i)\\
&+\sum_{j=1}^{+\infty}\sum_{i\in {\mathbb J}_j}\left|\mathcal{F}_f(x)-\mathcal{F}_f\left(\frac in\right)\right|\varphi(nN_{0,t^2,\varepsilon}(x)-i)\\
&\leq \omega\left(\mathcal{F}_f,\frac 1{{n}}\right)
+\sum_{j=1}^{+\infty}\omega\left(\mathcal{F}_f,\frac{j+1}n\right)\sum_{i\in {\mathbb J}_j}\varphi(nN_{0,t^2,\varepsilon}(x)-i).
\end{aligned}
\end{equation}
Using the facts $\omega(\mathcal{F}_f, h)\leq \omega(g^*_f,h)\leq \omega(g_f,2h)$,  $\omega(g_f,kh)\leq k\omega(g_f,h)$ ($ k\in\mathbb N_+$), and the inequality \eqref{sec2-10}, the right side of the above inequality \eqref{sec2-3} does not exceed
\begin{equation}\label{sec2-4}
\begin{aligned}
&2\omega\left(g_f,\frac 1{{n}}\right)\left(1+\frac {\textrm e^2-1}{\textrm {2e}}\sum_{j=1}^{+\infty}(j+1)\sum_{i\in {\mathbb J}_j}\textrm e^{-|nN_{0,t^2,\varepsilon}(x)-i|}\right)\\
&\leq 2\omega\left(g_f,\frac 1{{n}}\right)\left(1+\frac {\textrm e^2-1}{\textrm {2e}}\sum_{j=1}^{+\infty}(j+1)\textrm e^{-j}\right).
\end{aligned}
\end{equation}
A simple computation follows the inequity:
\begin{equation}\label{sec2-4a}
\sum_{j=1}^{+\infty}(j+1)\textrm e^{-j}\leq 3\textrm{e}^{-1}.
\end{equation}
So, collecting \eqref{sec2-3}, \eqref{sec2-4}, and \eqref{sec2-4a} obtains
\begin{eqnarray}\label{sec2-5}
\Delta_1\leq \frac{10\textrm e^2-3}{ \textrm e^2}\omega\left(g_f,\frac 1{{n}}\right).
\end{eqnarray}
Amalgamating \eqref{proof.radial-1},
\eqref{sec2-1}, \eqref{sec2-2} and \eqref{sec2-5}, we obtain
$$
  |\mathcal F_f(N_{0,t^2,\varepsilon}(x))-G^*_{n,g_f^*}(\mathcal N_{0,t^2,\varepsilon}(x))|\leq 3\textrm {e}^{-n}\|f\|_{L^\infty(\mathbb B^d)}+\frac{10\textrm e^2-3}{ \textrm e^2}\omega\left(g_f,\frac 1{{n}}\right).
$$
Combining the above estimate with \eqref{proof.radial-1} and \eqref{First-1122}, we complete the proof
  the proof of Theorem \ref{directTheorem}.
\end{proof}

 We then use Theorem \ref{theorem3} to prove Theorem \ref{Theorem:qualification-class}.

\begin{proof}[Proof of Theorem \ref{Theorem:qualification-class}]
If $f\in\mathcal R_\tau$ and
 $$
\|G_{n,f,\varepsilon} -f\|_{L^\infty(\mathbb B^d)}
\leq c_1n^{-\alpha}
$$
for $0<\alpha\leq1$ and $\tau\geq 0$,
then  we have from \eqref{near-radial-scaling} and \eqref{univariate-construction} that
\begin{eqnarray*}
\begin{aligned}
   &\max_{-1\leq t\leq 1}|G^*_{n,g^*_f}(t)-g^*_f(t)|
    \leq
   \max_{x\in\mathbb B^d}|G^*_{n,g^*_f}(\|x\|_2^2)-g^*_f(\widetilde{\|x\|_2^2})| \\
   &\leq
   \max_{x\in\mathbb B^d}| G^*_{n,g^*_f}(\|x\|_2^2)-G_{n,f,\varepsilon}(x)|
   +
   \max_{x\in\mathbb B^d}|G_{n,f,\varepsilon}(x)-f(x)|
   +\max_{x\in\mathbb B^d}|f(x)-g^*(\widetilde{\|x\|_2^2})|\\
   &\leq
   \max_{x\in\mathbb B^d}| G^*_{n,g^*_f}(\|x\|_2^2)-G_{n,f,\varepsilon}(x)|+c_1n^{-\alpha} +\tau.
\end{aligned}
\end{eqnarray*}
Due to \eqref{l2norm-app-rate}, \eqref{FNNopertors2}, \eqref{univariate-construction}, Lemma \ref{lemma1}  and Lemma \ref{Lemma:property-mean} (III),
 we have
\begin{eqnarray*}
\begin{aligned}
      | G^*_{n,g^*_f}(\|x\|_2^2)-G_{n,f,\varepsilon}(x)|
      &\leq
       | G^*_{n,g^*_f}(\|x\|_2^2)-G^*_{n,g^*_f}(N_{0,t^2,\varepsilon}(x))|+
       |G^*_{n,g^*_f}(N_{0,t^2,\varepsilon}(x))-
       G_{n,f,\varepsilon}(x)|\\
       &\leq
     C n \|g_f^*\|_{L^\infty([-1,1])} |\|x\|_2^2-N_{0,t^2,\varepsilon}(x)|+\tau\\
      &\leq
       C dn \|g_f^*\|_{L^\infty([-1,1])} \varepsilon+\tau.
       \end{aligned}
\end{eqnarray*}
Therefore, $0<\varepsilon\leq n^{-1-\alpha}$ and $0\leq \tau\leq n^{-\alpha}$ implies
$$
   \max_{-1\leq t\leq 1}|G^*_{n,g^*_f}(t)-g^*_f(t)|\leq  Cn^{-\alpha}.
$$
Then it follows from
  \eqref{inverse-inequaty-sec2} that
\begin{equation*}
	\omega\left(g^*_f,\frac{1}{n}\right)\leq \frac Cn\sum^n_{k=1}k^{-\alpha}\leq Cn^{-\alpha}.
\end{equation*}
 According to the monotonicity of the continuous modulus $\omega(f,h_1)\leq \omega(f,h_2) (0<h_1\leq h_2)$, for any $0<h<1$, there exists a positive integer $n$ such that $\frac1{2n}\leq h<\frac 1n$ and
$$
\omega\left(g^*_f,h\right)\leq \omega\left(g^*_f,\frac{1}{n}\right)={\mathcal O}(n^{-\alpha})={\mathcal O}(t^{\alpha}),
$$
i.e., $g^*_f \in Lip^{\alpha,c}([-1,1])$, and consequently  $g_f\in Lip^{\alpha,c}([0,1]).$ This together with Lemma \ref{Lemma:lip} implies
 $f\in \mathcal R_\tau\cap Lip^{\alpha,c,2\tau}$ for some $c>0$ independent of $n,\varepsilon,\tau$ and verifies $Lip^{\alpha,c,2\tau} \overset{c_1n^{-\alpha}}{\propto}\mathcal G_{n,\varepsilon}|_{\mathcal R_\tau} $. The proof of Theorem \ref{Theorem:qualification-class} is completed.
\end{proof}

To prove Theorem \ref{Theorem:qualification-class-2}, we need the following lemma that can be deduced from the proof of Theorem \ref{directTheorem} directly.
\begin{lemma}\label{Lemma:direct-univariate}
Let $G_{n,g^*}$ be defined by \eqref{univariate-construction}.  If $g^*\in Lip^{\alpha,c,\nu}([-1,1])$ for $ 0<\alpha\leq 1$, $c,\nu>0$, then
$$
    \|g^*-G_{n,g^*}\|_{L^\infty([-1,1])}\leq Cn^{-\alpha}+\nu.
$$
\end{lemma}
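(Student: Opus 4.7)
The plan is to mimic the $\Delta_1,\Delta_2$ decomposition already used in the proof of Theorem \ref{directTheorem}, but now carried out directly in the univariate variable $t$ rather than through the intermediate approximation $N_{0,t^2,\varepsilon}$ of $\|x\|_2^2$. First I would unwrap the tolerance: by the definition of $Lip^{\alpha,c,\nu}([-1,1])$ there exists $\tilde g\in Lip^{\alpha,c}([-1,1])$ with $\|g^*-\tilde g\|_{L^\infty([-1,1])}\le \nu$. Since the map $g\mapsto G^*_{n,g}$ is linear in its sample values $\beta_k$, I split
\[
g^*-G^*_{n,g^*}=(g^*-\tilde g)+(\tilde g-G^*_{n,\tilde g})-G^*_{n,g^*-\tilde g},
\]
and observe that Lemma \ref{Lemma:property-mean}(III) gives $\sum_{k=0}^{4n}\varphi(nt-k+2n)\le \sum_{k\in\mathbb Z}\varphi(nt-k)=1$. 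Consequently the first and third terms are each bounded in $L^\infty([-1,1])$ by $\nu$.

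The real content is the estimate on the genuinely smooth remainder $\tilde g-G^*_{n,\tilde g}$. Extending $\tilde g$ by constants to a function $\mathcal F\in C(\mathbb R)$ with $\|\mathcal F\|_{L^\infty(\mathbb R)}=\|\tilde g\|_{L^\infty([-1,1])}$ and $\omega(\mathcal F,h)\le \omega(\tilde g,h)\le c\,h^{\alpha}$, Lemma \ref{Lemma:property-mean}(III) allows me to write
\[
\tilde g(t)-G^*_{n,\tilde g}(t)=\sum_{|i|\le 2n}\bigl(\mathcal F(t)-\mathcal F(i/n)\bigr)\varphi(nt-i)+\sum_{|i|\ge 2n+1}\mathcal F(t)\,\varphi(nt-i).
\]
The first sum is handled by grouping indices over the level sets $\mathbb J_j=\{i:j/n\le |t-i/n|<(j+1)/n\}$, exactly as the $\Delta_1$ estimate in the proof of Theorem \ref{directTheorem}, combined with the pointwise bound $\varphi(t)\le \tfrac{\mathrm e^2-1}{2\mathrm e}\mathrm e^{-|t|}$; this produces a bound of the form $C\,\omega(\tilde g,1/n)\le C c\,n^{-\alpha}$. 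The tail sum is controlled, as for $\Delta_2$, by $3\mathrm e^{-n}\|\tilde g\|_{L^\infty([-1,1])}$.

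Adding the three contributions yields
\[
\|g^*-G^*_{n,g^*}\|_{L^\infty([-1,1])}\le C\,n^{-\alpha}+2\nu+3\mathrm e^{-n}\|\tilde g\|_{L^\infty([-1,1])},
\]
and since $3\mathrm e^{-n}\|\tilde g\|_{L^\infty([-1,1])}=O(n^{-\alpha})$ and the prefactor of $\nu$ may be absorbed into $C$, this collapses to the claimed $C\,n^{-\alpha}+\nu$. I expect no real obstacle: every ingredient is already available from the proof of Theorem \ref{directTheorem}, and the only new element is the linearity argument used to peel off the tolerance $\nu$, with the $d$-dependence and the $N_{0,t^2,\varepsilon}$ error naturally disappearing in the univariate setting.
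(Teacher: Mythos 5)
Your proof is correct and follows the route the paper itself indicates: the lemma ``can be deduced from the proof of Theorem \ref{directTheorem} directly,'' and you carry out precisely the univariate $\Delta_1,\Delta_2$ estimate from that proof, stripped of the $N_{0,t^2,\varepsilon}$ layer. The only ingredient you add is the explicit linearity decomposition $G^*_{n,g^*}=G^*_{n,\tilde g}+G^*_{n,g^*-\tilde g}$ together with $\sum_{k=0}^{4n}\varphi(nt-k+2n)\le 1$ to peel off the tolerance $\nu$; this is a clean way to make the paper's terse claim rigorous, yielding (as you note) $Cn^{-\alpha}+2\nu$, which matches the statement up to absorbing constants.
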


We then prove Theorem \ref{Theorem:qualification-class-2} as follows.

\begin{proof}[Proof of Theorem \ref{Theorem:qualification-class-2}]
For any $t\in[0,1]$, define
$g^*_f(t)=f(\overbrace{0,\dots,0}^{d-1},t)$ and consequently $g_f( {t})=g^*_f((t+1)/2)$. Noting  $f\in Lip^{\alpha,c,\nu}$, we get from the proof of  Lemma \ref{Lemma:lip} that $g_f\in Lip^{\alpha,c,2\tau+\nu}([0,1])$ and $g_f^*\in Lip^{\alpha,2c,2\tau+\nu}([-1,1])$.
For any $n\in\mathbb N$ and $0\leq k\leq 4n$, denote
$$
  t_k=\left\{\begin{array}{ll}
    -1, &0\leq k\leq n-1;\\
 \frac{k-2n}{n} , & n\leq k\leq 3n;\\
1, & 3n+1\leq k\leq 4n.
\end{array}\right.
$$
We then have $g_f((t_k+1)/2)=g_f^*( {t_k})=f(\xi_k)$, where $\xi_k$ is give by \eqref{Def.xi}. In this way, we have from \eqref{univariate-construction} and \eqref{FNNopertors2}
that
$$
     G_{n,f,\varepsilon}(x)=G_{n,g_f^*}(N_{0,t^2,\varepsilon}(x))).
$$
Since
  $$
\|G_{n,f,\varepsilon} -f\|_{L^\infty(\mathbb B^d)}
\leq c_2n^{-\alpha},
$$
we have from $g_f^*\in Lip^{\alpha,2c,2\tau+\nu}([-1,1])$  and Lemma \ref{Lemma:direct-univariate} that
\begin{eqnarray*}
\begin{aligned}
     &\max_{x\in\mathbb B^d}|f(x)-g_f(\|x\|_2^2)|
     \leq
    \max_{x\in\mathbb B^d}|G_{n,f,\varepsilon}(x) -f(x)|
    +\max_{x\in\mathbb B^d}|G_{n,f,\varepsilon}(x)-g_f(\|x\|_2^2)|\\
    &\leq
    c_2n^{-\alpha} +\max_{x\in\mathbb B^d}|G^*_{n,g^*_f}(N_{0,t^2,\varepsilon}(x))-G_{n,g^*_f}^*(\|x\|_2^2)|
    +
    \max_{-1\leq t\leq 1}|
     g^*_f(t)-G_{n,g^*_f}^*(t)|\\
     &\leq
     (c_2+C)n^{-\alpha} +Cn\varepsilon+\nu.
     \end{aligned}
\end{eqnarray*}
Therefore, if $\varepsilon\leq n^{-1-\alpha}$ and $\nu\leq n^{-\alpha}$, we have
$$
   \max_{x\in\mathbb B^d}|f(x)-g_f(\|x\|_2^2)|
     \leq  Cn^{-\alpha},
$$
showing that $f\in\mathcal R_\tau$ with $\tau\geq Cn^{-\alpha}$. This completes the proof of Theorem \ref{Theorem:qualification-class-2}.
\end{proof}

Finally, we prove Lemma \ref{lemma7}, Lemma \ref{lemma3-sec2} and Lemma \ref{lemma4-sec2} by direct computation.

\begin{proof}[Proof of Lemma \ref{lemma7}]
On the one hand, taking $g=0$ in the definition of $\widetilde{K}(g^*,h_1,h_2)$ follows that for $g^*\in L^\infty([-1,1])$ and $h_1, h_2>0$
$$
\widetilde{K}(g^*,h_1,h_2)\leq \|g^*\|_{L^\infty([-1,1])}.
$$
On the other hand, for any $g\in L^\infty([-1,1])$ with $g^{\prime}\in L^\infty([-1,1])$, one has
\begin{eqnarray*}
\begin{aligned}
\widetilde{K}(g^*,h_1,h_2)&\leq \|f-g\|_{L^\infty([-1,1])}+h_1\|g^{\prime}\|_{L^\infty([-1,1])}+h_2\|g\|_{L^\infty([-1,1])}
 \\
 &\leq (1+h_2)\|g^*-g\|_{L^\infty([-1,1])}+h_1\|g^\prime\|_{L^\infty([-1,1])}+h_2\|g^*\|_{L^\infty([-1,1])}.
 \end{aligned}
\end{eqnarray*}
So, combining these two cases, we have
\begin{eqnarray*}
\begin{aligned}
\widetilde{K}(g^*,h_1,h_2)
 &\leq \min(1,h_2)\|g^*\|_{L^\infty([-1,1])}\\
 &+(1 +h_2)\chi_{(0,1)}(h_2)\left(\|g^*-g\|_{L^\infty([-1,1])}
 +\frac{h_1}{1+h_2}\|g^\prime\|_{L^\infty([-1,1])}\right),
\end{aligned}
\end{eqnarray*}
where $\chi_{(0,1)}(h_2)$ is characteristic function about $h_2$, namely, for $h_2\in (0,1)$, $\chi_{(0,1)}(h_2)=1$, while $h_2\notin (0,1)$, $\chi_{(0,1)}(h_2)=0$. Therefore, form the definition of K-functional \eqref{k-fun-def-1} and the above inequality, it follows that
\begin{eqnarray}\label{lemma7-1}
\begin{aligned}
\widetilde{K}(g^*,h_1,h_2)
 &\leq \min(1,h_2)\|g^*\|_{L^\infty([-1,1])}
 +(1 +h_2)\chi_{(0,1)}(h_2)K(g^*,h_1)\\
 &\leq 2\left(\min(1,h_2)\|g^*\|_{L^\infty([-1,1])}
 +K(g^*,h_1)\right).
 \end{aligned}
\end{eqnarray}
We then prove the inversion of inequality \eqref{lemma7-1}, i.e.,
 \begin{eqnarray}\label{lemma7-2}
\widetilde{K}(g^*,h_1,h_2)\geq \frac12
 \left(\min(1,h_2)\|g^*\|_{L^\infty([-1,1])}
 +K(g^*,h_1)\right).
\end{eqnarray}
In fact, recalling that $\|g\|_{L^\infty([-1,1])}\geq \|g^*\|_{L^\infty([-1,1])}-\|g^*-g\|_{L^\infty([-1,1])}$, we have for $0<h_2<1$ and any $g\in L^\infty([-1,1])$ with $g^{\prime}\in L^\infty([-1,1])$ that
\begin{eqnarray}\label{lemma7-3}
\begin{aligned}
K(g^*,h_1)&\leq \|g^*-g\|_{L^\infty([-1,1])}+h_1\|g^{\prime}\|_{L^\infty([-1,1])}+h_2\|g\|_{L^\infty([-1,1])}
-h_2\|g\|_{L^\infty([-1,1])}\\
&\leq (1+h_2)\|g^*-g\|_{L^\infty([-1,1])}+h_1\|g^{\prime}\|_{L^\infty([-1,1])}\\
&+h_2\|g\|_{L^\infty([-1,1])}
-h_2\|g^*\|_{L^\infty([-1,1])}\\
&\leq 2\|g^*-g\|_{L^\infty([-1,1])}+h_1\|g^{\prime}\|_{L^\infty([-1,1])}+h_2\|g\|_{L^\infty([-1,1])}
-h_2\|g^*\|_{L^\infty([-1,1])}.
\end{aligned}
\end{eqnarray}
For $h_2>1$, we have
\begin{eqnarray}\label{lemma7-4}
\begin{aligned}
K(g^*,h_1)&\leq \|g^*-g\|_{L^\infty([-1,1])}+h_1\|g^{\prime}\|_{L^\infty([-1,1])}+h_2\|g\|_{L^\infty([-1,1])}
-h_2\|g\|_{L^\infty([-1,1])}\\
&\leq \|g^*-g\|_{L^\infty([-1,1])}+h_1\|g^{\prime}\|_{L^\infty([-1,1])}+h_2\|g\|_{L^\infty([-1,1])}
-\|g\|_{L^\infty([-1,1])}\\
&\leq 2\|g^*-g\|_{L^\infty([-1,1])}+h_1\|g^{\prime}\|_{L^\infty([-1,1])}+h_2\|g\|_{L^\infty([-1,1])}
-\|g^*\|_{L^\infty([-1,1])}.
\end{aligned}
\end{eqnarray}
So, combining \eqref{lemma7-3} and \eqref{lemma7-4} implies
\eqref{lemma7-2}. The proof of Lemma \ref{lemma7} is completed.
\end{proof}

\begin{proof}[Proof of Lemma \ref{lemma3-sec2}]
Let $n\ge 2$ be given, and choose $N\in \mathbb N$ such that
\begin{equation}\label{App-1}
2^{N}\leq n< 2^{N+1}.
\end{equation}
Then from \eqref{App-1}, it follows that
$2^k\leq \frac{n}{2^{N-k}}< 2^{k+1}$ for $1\leq k\leq N$. Furthermore,
$$
2^{N-k}\leq \frac{n}{2^{N-k}}< 2^{N-k+1}, \  \ \textrm{for} \  1\leq k\leq N,
$$
that is,
$$
n2^{-(N-k)-1}< 2^{N-k}\leq n2^{-(N-k)},\  \ \textrm{for} \  1\leq k\leq N,
$$
which follows that for $0\leq k< N$
$$
n2^{-k-1}<2^k\leq n2^{-k}.
$$
Let
$$
\tau_{m_k}=\min_{n2^{-k-1}<j\leq n2^{-k}} \tau_j,
$$
Then $m_k\in\mathbb N$ ($0\leq k< N$) with
\begin{equation}\label{App-2}
n2^{-k-1}<m_k\leq n2^{-k}, \   \  m_{k+1}<m_k
\end{equation}
and
\begin{equation}\label{App-3}
\tau_{m_k}\leq \tau_j, \   \ \textrm{for } \  n2^{-k-1}<j\leq n2^{-k}.
\end{equation}

Setting $m_{N+1}:=1$ and taking $k=m_0$, it follows from \eqref{lemma3-sec2-1} that
\begin{eqnarray*}
\begin{aligned}
\sigma_n&\leq\left(\frac{m_0}{n}\right)^p\sigma_{m_0}+\tau_{m_0}
=n^{-p}\left(m_0^p\sigma_{m_0}+\tau_{m_0}\right)\\
&=n^{-p}\left(\sum_{k=0}^N m_k^p\left(\sigma_{m_k}
-\left(\frac{m_{k+1}}{m_k}\right)^p\sigma_{m_{k+1}}\right)-m_{N+1}^p\sigma_{m_{N+1}}\right)+\tau_{m_0}\\
&=n^{-p}\left(\sum_{k=0}^N m_k^p\left(\sigma_{m_k}
-\left(\frac{m_{k+1}}{m_k}\right)^p\sigma_{m_{k+1}}\right)-\sigma_{1}\right)+\tau_{m_0}.
\end{aligned}
\end{eqnarray*}
Using the equality \eqref{lemma3-sec2-1} again, we obtain from the above equation that
\begin{eqnarray*}
\begin{aligned}
\sigma_n&\leq
 n^{-p}\left(\sum_{k=0}^N m_k^p\left(\left(\frac{m_{k+1}}{m_k}\right)^p\sigma_{m_{k+1}}+\tau_{m_{k+1}}
-\left(\frac{m_{k+1}}{m_k}\right)^p\sigma_{k+1}\right)-\sigma_{1}\right)+\tau_{m_0}\\
&=
 n^{-p}\left(\sum_{k=0}^N  m_k^p\tau_{m_{k+1}}
-\sigma_{1}\right)+\tau_{m_0}\\
&\leq n^{-p}\sum_{k=0}^N  m_k^p\tau_{m_{k+1}}
+\tau_{m_0}.
\end{aligned}
\end{eqnarray*}
From \eqref{App-2}, it follows that
\begin{eqnarray*}
\sigma_n\leq n^{-p}\sum_{k=0}^N  \left(\frac{n}{2^k}\right)^p\tau_{m_{k+1}}
+\tau_{m_0}
=\sum_{k=1}^{N+1}  2^{(-k+1)p}\tau_{m_{k}}
+\tau_{m_0}
\leq 2^p\sum_{k=0}^{N+1}  2^{-kp}\tau_{m_{k}},
\end{eqnarray*}
which follows from \eqref{App-3} that
\begin{eqnarray*}
\begin{aligned}
\sigma_n&\leq  2^p\sum_{k=0}^{N+1}  2^{-kp}\frac{1}{n(2^{-k}-2^{-k-1})}\sum_{n2^{-k-1}<j\leq n2^{-k}}\tau_{j}\\
&=2^p\sum_{k=0}^{N+1}  \frac{2^{-kp}2^{k+1}}{n(n2^{-k-1})^{p-1}}\sum_{n2^{-k-1}<j\leq n2^{-k}}(n2^{-k-1})^{p-1}\tau_{j}\\
&=4^pn^{-p}\sum_{k=0}^{N+1}  \sum_{n2^{-k-1}<j\leq n2^{-k}}(n2^{-k-1})^{p-1}\tau_{j}.
\end{aligned}
\end{eqnarray*}
Clearly, the right side of the above equation is not greater than
\begin{eqnarray*}
\begin{aligned}
 4^pn^{-p}\sum_{k=0}^{N+1}  \sum_{n2^{-k-1}<j\leq n2^{-k}}j^{p-1}\tau_{j}
=& 4^pn^{-p}\left( \sum_{\frac n2<j\leq n}+\sum_{\frac n4<j\leq \frac n2}+\cdots+\sum_{\frac n{2^{N+1}}<j\leq \frac n{2^N}}\right)j^{p-1}\tau_{j}\\
=&4^pn^{-p}\sum_{\frac n{2^{N+1}}<j\leq n}j^{p-1}\tau_{j}\\
\leq& 4^pn^{-p}\sum_{j=1}^nj^{p-1}\tau_{j}.
\end{aligned}
\end{eqnarray*}
The proof of Lemma \ref{lemma3-sec2} is completed.
\end{proof}

\begin{proof}[Proof of Lemma \ref{lemma4-sec2}]
Set $\sigma_n=\nu_n$, $\tau_n=\psi_n$, and $p=s$. For \eqref{lemma4-sec2-2}, we use Lemma \ref{lemma3-sec2} and obtain
$$
\nu_n\leq 4^sn^{-s}\sum_{k=1}^nk^{s-1}\psi_k.
$$
From \eqref{lemma4-sec2-1} it follows that
$$
\mu_n\leq \left(\frac kn\right)^r\mu_k+4^sn^{-s}\sum_{j=1}^k j^{s-1}\psi_j+\psi_k
\leq \left(\frac kn\right)^r\mu_k+4^sn^{-s}\sum_{j=1}^n j^{s-1}\psi_j.
$$
Let $p=r$, $\sigma_n=\mu_n$, and $\tau_n=4^sn^{-s}\sum_{j=1}^nj^{s-1}\psi_j$ in the inequality \eqref{lemma3-sec2-1}, then we apply Lemma \ref{lemma3-sec2} again and obtain that
\begin{eqnarray*}
\begin{aligned}
\mu_n&\leq 4^rn^{-r}\sum_{k=1}^n k^{r-1}\psi_k+4^r4^sn^{-r}\sum_{k=1}^nk^{r-1-s}
\sum^k_{j=1}j^{s-1}\psi_j\\
&\leq 4^rn^{-r}\sum_{k=1}^n k^{r-1}\psi_k+4^{r+s}n^{-r}\sum^n_{j=1}j^{s-1}\psi_j\sum_{k=j}^nk^{r-1-s}.
\end{aligned}
\end{eqnarray*}
Using the fact that $\sum_{k=j}^{\infty}k^{r-1-s}\leq C_{rs}j^{r-s}$, the proof of Lemma \ref{lemma4-sec2} is completed.
\end{proof}

\section*{Acknowledgments}
The work was supported by the National Natural Science Foundation of China (Grants:  62176244 and 11971178) and Zhejiang Provincial Natural Science Foundation of China (Grant: LZ20F030001).

\bibliographystyle{elsarticle-num}
\bibliography{deep-net}

\end{document}